\DeclareRobustCommand\onedot{\futurelet\@let@token\@onedot}
\def\@onedot{\ifx\@let@token.\else.\null\fi\xspace}
\def\iid{{i.i.d}\onedot}
\def\eg{{e.g}\onedot} 
\def\ie{{i.e}\onedot}
\theoremstyle{plain}
\newtheorem{theorem}{Theorem}[section]
\newtheorem{lemma}[theorem]{Lemma}
\theoremstyle{definition}
\theoremstyle{remark}
\newcommand{\Title}{More Flexible PAC-Bayesian Meta-Learning by Learning Learning Algorithms}
\icmltitlerunning{\Title}
\newcommand{\Z}{\mathcal{Z}} 
\newcommand{\F}{\mathcal{F}}
\newcommand{\M}{\mathcal{M}} 
\newcommand{\A}{\mathcal{A}}
\newcommand{\T}{\mathcal{T}}
\newcommand{\qa}{\mathcal{Q}(A)}
\newcommand{\q}{\mathcal{Q}} 
\newcommand{\pa}{\mathcal{P}(A)}
\newcommand{\p}{\mathcal{P}}
\newcommand{\KL}{\operatorname{\mathbf{KL}}}
\newcommand{\er}{\mathcal{R}} 
\newcommand{\her}{\widehat{\mathcal{R}}} 
\newcommand{\ter}{\widetilde{\mathcal{R}}} 
\newcommand{\E}{\operatorname*{\mathbb{E}}}
\newcommand{\prior}{\mathfrak{P}(\pi, \p)}
\newcommand{\posterior}{\mathfrak{Q}(\rho, \q)}
\newcommand{\pri}{\mathfrak{P}(\p)}
\newcommand{\post}{\mathfrak{Q}(A, \qa)}
\newcommand{\R}{\mathbb{R}} 
\newcommand{\ind}{\mathbbm{1}} 
\newcommand{\w}{\mathbf{w}}
\newcommand{\x}{\mathbf{x}} 
\newcommand{\N}{\mathcal{N}} 
\newcommand{\U}{\mathcal{U}}
\renewcommand{\paragraph}[1]{\medskip\noindent\textbf{#1}\quad}
\begin{document}

\twocolumn[
\icmltitle{\Title}



\icmlsetsymbol{equal}{*}

\begin{icmlauthorlist}
\icmlauthor{Hossein Zakerinia}{ista}
\icmlauthor{Amin Behjati}{sharif}
\icmlauthor{Christoph H. Lampert}{ista}
\end{icmlauthorlist}

\icmlaffiliation{ista}{Institute of Science and Technology Austria (ISTA)}
\icmlaffiliation{sharif}{Sharif University of Technology}

\icmlcorrespondingauthor{Hossein Zakerinia}{Hossein.Zakerinia@ist.ac.at}

\icmlkeywords{Machine Learning, ICML}

\vskip 0.3in
]



\printAffiliationsAndNotice{}  

\begin{abstract}
We introduce a new framework for studying 
meta-learning methods using PAC-Bayesian theory. 
Its main advantage over previous work is that
it allows for more flexibility in how the 
transfer of knowledge between tasks is realized. 
For previous approaches, this could only happen
indirectly, by means of learning \emph{prior 
distributions over models}. 
In contrast, the new generalization bounds 
that we prove express the process of 
meta-learning much more directly as learning 
the \emph{learning algorithm} that should be 
used for future tasks.
The flexibility of our framework makes it suitable to analyze a wide range of meta-learning mechanisms and even design new mechanisms. Other than our theoretical contributions we also show empirically that our framework improves the prediction quality in practical meta-learning mechanisms.
\end{abstract}

\section{Introduction}
Machine learning systems have remarkable success 
in solving complex tasks when they are trained on large amounts of data. 
However, their success is still limited when only little data is available for a task.
One reason for this is that common machine learning algorithms, 
such as minimizing training loss through gradient-based optimization,
are very generic. 
Tailored to be applicable across a wide range of data sources and target tasks, the underlying models need to have many degrees of freedom, and they require a lot of training data
to adjust these suitably.
In contrast to this, natural learning systems can learn new tasks from little task-specific data. They achieve this by transferring and reusing information from their past experience to new tasks, 
instead of learning a new model from scratch every time. 

\emph{Meta-learning} (also called \emph{learning-to-learn}) 
is a principled way of also giving machine learning systems the 
ability to share knowledge between related learning tasks~\citep{schmidhuber1987evolutionary,thrun1998}. 
Instead of learning individual models for each task, a meta-learner learns a mechanism, a \emph{learning algorithm}, that sets the model parameters given a (usually small) amount of training data. 

In practice, there are numerous possibilities to realize this step.
\emph{Model-based} approaches to meta-learning learn prototypical models which can efficiently be adjusted (\eg finetuned) to specific tasks~\citep{finn2017model,nichol2018first}. 
Relatedly, \emph{regularization-based} approaches learn regularization terms that prevent future tasks from overfitting even if trained on little data~\citep{denevi2019learning}. 
\emph{Hypernetwork}-based approaches learn secondary networks that output the weights of task-specific models~\citep{zhao2020meta,scott2023pefll}.
\emph{Representation-based} approaches learn (often low-dimensional) feature representations in which learning can be performed with less training data than in the original input space~\citep{maurer2009transfer,maurer2016benefit,lee2019meta}.
\emph{Optimization-based} approaches learn the steps or (hyper)parameters of an optimization procedure~\citep{hochreiter2001learning,ravi2016optimization,li2017meta}. 

All meta-learning methods strive for \emph{generalization} between previously seen tasks and future ones. 
Unfortunately, most of the above methods are only understood in terms of their empirical performance on example tasks, but they lack theoretical guarantees on their generalization abilities.

\emph{Meta-learning theory} studies the theoretical properties of meta-learning method. 
In particular, it aims at providing quantitative generalization guarantees for them, in the form of high-probability upper bounds on the quality of models learned on future tasks, even before data for these tasks is available. 
Historically, the first attempts to do so had the form of classical PAC guarantees, which were data-independent and had to be derived individually for each method~\citep{baxter2000model,maurer2009transfer}.
However, starting with~\citet{pentina2014pac}, most recent works exploit PAC-Bayesian theory,
which proved to be more flexible and powerful. 
Not only does it allow deriving bounds that can be instantiated for different meta-learning
methods, but the generalization guarantees are also data-dependent, and interpreting them as 
learning objectives can inspire new meta-learning methods~\citep{amit2018meta, rothfuss2021pacoh}.
Unfortunately, existing theoretical results still apply only to a small number of actual meta-learning methods, namely those in which the knowledge transfers from previous to future tasks can be expressed as a shift of prior distributions over models.
Of the ones we mention above, that includes methods based on learning
a model prototype or a regularization term, but not the more flexible 
ones based on learning representations, optimizers, or hypernetworks.

\textbf{Our main contribution in this work is the introduction of new 
form of PAC-Bayesian generalization bounds that provides theoretical 
guarantees for a much broader class of meta-learning methods than previous
ones.} In particular, this includes \emph{all} types of methods listed above.
Specifically, the transfer of knowledge from previous to future tasks 
is modeled not indirectly, using distributions over models priors, but 
directly, using distributions over \emph{learning algorithms}.
This viewpoint reflects the \emph{learning-to-learn} aspect of meta-learning much better than previous ones, as it allows the meta-learner to directly select which learning algorithms are meant to be executed on future tasks. In contrast, the prior-based transfer of previous bounds allows only an indirect influence by means of expressing a preference of some models over others.

Besides our theoretical contributions, we also report on experiments in two standard benchmark settings: we demonstrate that using our generalization bound as a learning objective yields a meta-learning algorithm of improved empirical performance compared to previous methods based on prior-based transfer, and we show that even in the case where knowledge transfer can actually be expressed as model priors, our bound is numerically tighter than previous ones. 

\section{Background}\label{sec:background}
There are many ways how the sharing of information between learning tasks can be formalized. 
In this work, we adopt the meta-learning setting first proposed in \citet{baxter2000model} under the name of \emph{learning to learn}.
We call a tuple $t=(D,S)$ a \emph{task}, where $D$ is a data distribution over a sample space $\Z$, and $S=\{z_1,\dots,z_m\}$ is a dataset sampled \iid according to this distribution. 
A \emph{meta-learning method} (or \emph{meta-learner}) has access to the training sets, $S_1,\dots,S_n$ of a number of \emph{training tasks}, $t_1,\dots,t_n$, which themselves are \iid sampled from an unknown distribution, $\tau$, over a \emph{task environment}.
Let $\F$ be a hypothesis set of possible models, and $\ell:\Z\times\F\to[0,1]$ a loss function. 
For any set $X$, we denote by $\M(X)$ the set of probability distributions over $X$, and by $\mathcal{P}(X)$ the power set of $X$, where for our purposes, only subsets of finite size matter.

The goal of meta-learning is to output (in 
some parameterized form) a \emph{learning algorithm}, 
$A:\mathcal{P}(\Z)\to\F$, \ie, a mapping 
from the set of datasets to the set of models, with 
the goal to make the \emph{risk} (expected loss) as 
small as possible in expectation when applying the 
algorithm $A$ to a future task, \ie minimize 
\begin{align}
\E_{(D,S)\sim\tau} \er(A(S))
\quad\text{for}\quad \er(f)=\E_{z\sim D}\ell(z,f).
\end{align}
Our main tool for analyzing such meta-learning methods
theoretically will be PAC-Bayesian learning theory. 
Before we apply this in the meta-learning setting, 
we remind the reader of its main concepts in the 
standard setting.

\subsection{PAC-Bayesian Learning}
Classical PAC-Bayesian bounds~\citep{mcallester1998some, maurer2004note} quantify the generalization properties of \emph{stochastic models}. A stochastic model is parameterized by a distribution, $Q\in\M(\F)$, over the model space. For any input $z\in\Z$ it makes a stochastic predictions by sampling $f\sim Q$ and outputting $f(z)$.
We extend the loss function to this as  
\begin{align}
\ell(z,Q)&:=\E_{f \sim Q}\ell(z,f).\label{eq:er_zQ}
\end{align}
Now, let $t=(D,S)$ be the given task. 
The PAC-Bayesian framework provides an upper bound 
on the \emph{expected risk} of a (stochastic) model, $Q$, 
\begin{align}
\er(Q) &=\E_{z \sim D} \ell(z,Q)
\intertext{in terms of its \emph{empirical risk}}
\her(Q) &=\frac{1}{|S|}\sum_{z\in S} \ell(z,Q),
\end{align}
and some complexity terms.

For example, from \citet{maurer2004note} it follows that for 
any fixed $\delta > 0$ and any fixed prior distribution over 
models $P \in \M(\F)$, 
with probability at least $1-\delta$ over the sampling of the 
training dataset, $S$, it holds that for all $Q\in\M(\F)$,
\begin{align}
    \er(Q) \leq \her(Q) +\sqrt{ \frac{\KL(Q\|P) + \log(\frac{2\sqrt{m}}{\delta})}{2m}},
    \label{eq:maurer}
\end{align}
where $\KL$ denotes the Kullback-Leibler divergence. In words, 
the stochastic model $Q$ is guaranteed to generalize well, if 
it is chosen sufficiently close to the prior, $P$. 

Since their introduction by \citet{mcallester1998some}, many similar 
bounds have been derived that mostly differ the way they compare 
empirical and expected error, the specific form of the complexity 
term, and with further assumptions they make.
See, , \eg, \citet{guedj2019primer, alquier2021user, hellstrom2023generalization}
for surveys. 
However, the bounds have in common that the size of the complexity term 
is mostly determined by the $\KL$ divergence between the posterior 
distribution $Q$ and a fixed data-independent prior $P$, like it 
does in Equation~\eqref{eq:maurer}.
The bounds also have in common that they hold uniformly with 
respect to $Q$. This means that one can use the right-hand side 
of the inequality as a training objective, and the guarantees will 
still hold for the (stochastic) model resulting from minimizing it.

\subsection{PAC-Bayesian Meta-Learning}\label{subsec:meta}
PAC-Bayesian bounds for meta-learning were pioneered by \citet{pentina2014pac}.
They assumed a fixed learning procedure that outputs a posterior distribution
over models, $Q(S,P)\in\M(\F)$, depending on the training data, $S$, as well 
as on a prior distribution, $P\in\M(\F)$.
A canonical example of such a procedure would be to return the stochastic model that minimizes the right hand side of~\eqref{eq:maurer}. 

For any prior, $P$, the expected risk on a new task,
\begin{align}
    \er(P)=\E_{(S,D)\sim\tau}\E_{z\sim D}\ell(z,Q(S,P)), \label{eq:erP}
\end{align} 
provides a measure how suitable this choice of prior is for future tasks.
However, \eqref{eq:erP} cannot be computed, 
because it depends on unobserved quantities. 
Under Baxter's assumptions that the available training tasks 
are sampled from the same task environment as future tasks, 
the \emph{empirical multi-task risk}, 
\begin{align}
\her(P)=\frac1n\sum_{i=1}^n \frac{1}{|S_i|}\sum_{z\in S_i}\ell(z,Q(S_i,P)), \label{eq:herP}
\end{align}
can serve as empirical proxy for $\er(P)$.
The difference between \eqref{eq:erP} and \eqref{eq:herP} 
can be bounded with PAC-Bayesian techniques, as long as 
the prior is not simply chosen in a deterministic way, 
but by means of specifying its posterior distribution, 
$\q\in\M(\M(\F))$, called the \emph{hyper-posterior}.
Overall, one obtains guarantees that $\er(\q)=\E_{P\sim\q}\er(P)$
is bounded by $\her(\q)=\E_{P\sim\q}\her(P)$ and some complexity
terms that are increasing functions of $\KL(\q\|\p)$, where 
$\p\in\M(\M(\F))$ is a data-independent \emph{hyper-prior}, 
and of $\KL(A(S_i,P)\|P)$ in expectation over $P\sim\q$.

Numerous later works improved and extended these PAC-Bayesian meta-learning bounds:
\citet{amit2018meta} designed an optimization algorithm based on this 
setup for neural networks, 
\citet{liu2021pac} proved bounds with a different form,
\citet{guan2022fast} and \citet{riou2023bayes}
proved fast rate bounds for this setup,
\citet{friedman2023adaptive} proved bounds based on data-dependent PAC-Bayes bounds,
and \citet{rezazadeh2022unified} provided a general framework for proving several different 
form bounds.
\citet{rothfuss2021pacoh, rothfuss2022pacoh} generalized the setup to unbounded 
loss functions, and developed a new algorithm for estimating optimal hyper-posteriors. 
\citet{ding2021bridging, tian2023can} studied this setup for few-shot meta-learning, and
\citet{farid2021generalization} studied the connection between PAC-Bayes and uniform stability in this setup.

While these works constitute substantial progress, all of them share a 
common limitation that they inherited from the setup originally defined 
in \citet{pentina2014pac}:
\textbf{they only apply to meta-learning methods that are expressible  
as a single learning strategy parameterized by a prior distribution 
over models.}
However, many practical meta-learning algorithms do not follow this pattern, 
thereby preventing the existing PAC-Bayesian frameworks from studying the 
generalization ability of these algorithms. 

One exception is \citet{pentina2015lifelong}, which provided a bound over transformation operators between tasks, but this situation applies only under rather restrictive assumptions. 
Another one is the recent \citet{scott2023pefll}, which proved a 
related PAC-Bayesian bound 
in the context of personalized federated learning. 
However, the result provides only rather weak guarantees, because it assumes fixed prior distributions that have to be chosen without any knowledge about the task environment instead of benefiting from environment-dependent priors as the works above.
Moreover, ~\citet{nguyen2022pac} provided a bound in an alternative meta-learning framework that uses both validation and training data, which also assumes fixed priors.

As an alternative framework, information-theoretic bounds have been derived~\citep{chen2021generalization, hellstrom2022evaluated,hellstrom2023generalization}.
These, however, typically provide bounds in expectation rather than with high probability 
over the training tasks, and they are harder to compute than the PAC-Bayesian ones.
Additionally, these works use distribution-dependent priors
while the works in the PAC-Bayesian framework use data-dependent priors (through a hyper-posterior).
In Section~\ref{sec:discussion}, we describe the role of hyper-posteriors in more detail.
 
In this work, \textbf{we introduce a new form of PAC-Bayesian meta-learning bounds that overcomes the limitation of previous works.} It works in a more general setup that applies to any set of learning algorithms as well as allowing for algorithm-specific hyper-posteriors. 

\begin{table}
\centering
\caption{Notations}\smallskip
\begin{tabular}{|c|c|}
\hline
$P \in \M(\F)$ &  Prior distribution \\
\hline
$Q \in \M(\F)$ & Posterior distribution \\
\hline
$\pi \in \M(\A)$ & Meta-Prior over algorithms  \\
\hline
$\rho \in \M(\A)$ & Meta-Posterior over algorithms  \\
\hline
$\pa \in \M(\M(\F))$ & Hyper-Prior over priors \\
\hline
$\qa \in \M(\M(\F))$ & Hyper-Posterior over priors  \\
\hline
\end{tabular}
\end{table}

\section{Main Results} \label{sec:results}
In this section, we state and discuss our main result: 
a generalization bound that holds for any meta-learning 
method that is expressible as a way to \emph{choose a 
learning algorithm for future tasks}. 

Formally, let $\A = \{A: \mathcal{P}(\Z)\to\M(\F)\}$
be a \emph{set of stochastic learning algorithms} that 
take as input a dataset and output a posterior distribution 
over models.
Note that this algorithm set does not have to be homogeneous. 
For example, $\A$, could contain different architectures of 
neural networks, which are initialized in different ways and 
adjusted to the training data by different optimizers, or decision 
trees with different construction rules, or support vector machines 
with different kernels, or prototype-based classifiers with 
different distance measures, or all of the above together.

Given a set of training datasets, $S_1,\dots,S_n$, the meta-learner outputs a posterior distribution over the algorithms, $\rho\in\M(\A)$. 
We call $\rho$ the \emph{meta-posterior (distribution)}, and we define its risk on future tasks as
\begin{align}
    \er(\rho)= \E_{A \sim \rho} \E_{(S,D)\sim\tau}\E_{z\sim D}\ell(z,A(S)).\label{eq:erRho}
\end{align}
If this value is small then the meta-learner has done a good 
job at identifying learning algorithms that work well on 
future tasks.
As such, Equation~\eqref{eq:erRho} describes the actual 
quantity of interest. However, it is not a computable value.
Therefore, we introduce the \emph{empirical risk} of the 
meta-posterior, $\rho$, on the $n$ training tasks as 
\begin{align}
\her(\rho)=\E_{A \sim \rho}\frac1n\sum_{i=1}^n \frac{1}{|S_i|}\sum_{z\in S_i}\ell(z,A(S_i)).
\label{eq:herRho}
\end{align}
Our main results, Theorems~\ref{theorem:Main_pa} and~\ref{theorem:Main_p} 
below, provide upper bounds on $\er(\rho)$ in terms of $\her(\rho)$
and suitable complexity terms.

Before stating them, we introduce one additional 
source of flexibility that our framework possesses.
Remember that in the classical setting of 
Section~\ref{subsec:meta}, one fixed hyper-prior 
distribution over priors was given, and the 
meta-learner was meant to learn one 
hyper-posterior distribution over priors. 
In our setting, especially if the algorithm 
set is heterogeneous, it stands to reason that 
different algorithms might benefit from 
different choices of prior distributions. 
To express this, let $\p:\A\to\M(\M(\F))$ 
now be a fixed data-independent mapping of algorithms to 
hyper-priors, \ie for each algorithm, $A$, 
one hyper-prior, $\pa\in\M(\M(\F))$, is 
associated in a data-independent way.
Analogously, denote by $\q:\A\to\M(\M(\F))$ 
a mapping of algorithms to hyper-posteriors.
As part of the meta-learning process, the meta-learner 
constructs $\q$ by specifying a hyper-posterior 
distribution, $\qa\in\M(\M(\F))$, for any 
learning algorithm $A\in\A$. 

We now state our first main result: a generalization
bound for $\er(\rho)$ in terms of $\her(\rho)$
that holds with high probability uniformly 
for all possible choices of $\rho$ and $\q$. 
\begin{theorem}
For any fixed meta-prior $\pi$, fixed hyper-prior mapping 
$\p$ and any $\delta>0$, with probability 
at least $1-\delta$ over the sampling of the training tasks,
for all distributions $\rho \in\M(\A)$ over algorithms,
and for all hyper-posterior mappings $\q:\A\to\M(\M(\F))$ 
it holds
\begin{align}
    &\er(\rho) \leq \her(\rho) +\sqrt{ \frac{\KL(\rho\|\pi) + \log(\frac{4\sqrt{n}}{\delta})}{2n}}  
\label{eq:bound_1}
    \\& + \sqrt{\frac{\KL(\rho||\pi) + \E_{A \sim \rho}[C_1(A, \q, \p)]+ \log(\frac{8mn}{\delta}) + 1}{2mn}},
\notag
\end{align}
with 
\begin{align}
\begin{split}
C_1(A, \q, \p)=&\KL(\qa \| \pa) 
\\&+ \E_{P \sim \qa}  \sum_{i=1}^{n} \KL(A(S_i) || P).
\label{complexity_1}
\end{split}
\end{align}
\label{theorem:Main_pa}
\end{theorem}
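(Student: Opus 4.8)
The plan is to bound the excess risk $\er(\rho)-\her(\rho)$ by splitting it across two levels, spending a confidence budget of $\delta/2$ on each. The bridge between the levels is the (unobservable) intermediate risk
\[
\ter(\rho) = \E_{A\sim\rho}\frac1n\sum_{i=1}^n \E_{z\sim D_i}\ell(z,A(S_i)),
\]
i.e. the meta-posterior's average of the \emph{true} per-task risks on the sampled tasks. I would then write $\er(\rho)-\her(\rho) = \big(\er(\rho)-\ter(\rho)\big) + \big(\ter(\rho)-\her(\rho)\big)$ and control the two brackets separately, assuming $|S_i|=m$ throughout; the first will produce the $1/(2n)$ square-root term and the second the $1/(2mn)$ term.

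For the environment level, I would note that for a fixed $A$ the true per-task risks $\er(A(S_i))$ are i.i.d.\ across $i$, lie in $[0,1]$, and have mean $\E_{(S,D)\sim\tau}\er(A(S))$. Treating each task as a single ``example'', each $A\in\A$ as a ``hypothesis'' with loss $\er(A(S))$, prior $\pi$ and posterior $\rho$, I can apply the bound of Equation~\eqref{eq:maurer} verbatim over $\A$ with $n$ samples at confidence $\delta/2$. This yields, with probability $\geq 1-\delta/2$ and simultaneously for all $\rho$, $\er(\rho)\le\ter(\rho)+\sqrt{(\KL(\rho\|\pi)+\log(4\sqrt n/\delta))/(2n)}$, which is exactly the first square-root term.

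The core of the argument is the task level, where $C_1$ appears. Here I would introduce a \emph{joint} data-independent prior $\prior$ and a \emph{joint} posterior $\posterior$ over tuples $(A,P,f_1,\dots,f_n)$: under $\prior$ one draws $A\sim\pi$, then $P\sim\pa$, then $f_i\sim P$ independently; under $\posterior$ one draws $A\sim\rho$, then $P\sim\qa$, then $f_i\sim A(S_i)$ independently. By the chain rule for KL divergence, $\KL(\posterior\|\prior)$ factorizes into precisely $\KL(\rho\|\pi)+\E_{A\sim\rho}[C_1(A,\q,\p)]$, matching the numerator. The key structural observation is that the quantity to be concentrated, the gap $\frac1n\sum_i(\er_i(f_i)-\her_i(f_i))$ with $\er_i,\her_i$ the true/empirical risks of a \emph{fixed} model on task $i$, depends on the tuple only through $f_{1:n}$: its average under $\posterior$ equals $\ter(\rho)-\her(\rho)$, while under $\prior$ the $f_i$ are drawn from the data-independent priors $P$. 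Conditioning on $f_{1:n}$, the gap is an average of $mn$ independent $[0,1]$-valued terms, so Hoeffding's lemma bounds its moment generating function by $e^{\lambda^2/(8mn)}$ for every fixed tuple, hence $\E_{\mathrm{data}}\,\E_{h\sim\prior}\,e^{\lambda\cdot\mathrm{gap}}\le e^{\lambda^2/(8mn)}$ by Fubini.

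I would finish the task level with the change-of-measure (Donsker--Varadhan) step: combining this MGF bound, Markov's inequality, and the variational formula gives, for each fixed $\lambda$, an inequality of the form $\ter(\rho)-\her(\rho)\le \frac{\KL(\posterior\|\prior)+\log(1/\delta')}{\lambda}+\frac{\lambda}{8mn}$, holding \emph{uniformly} over all $\rho$ and all hyper-posterior mappings $\q$ at once (uniformity is automatic from the variational formula). Turning this into the clean $\sqrt{\,\cdot/(2mn)}$ form would require the data-dependent optimum $\lambda\approx\sqrt{8mn\,(\text{numerator})}$, which is inadmissible; I would resolve this by a union bound over a grid of $\lambda$-values (e.g.\ the integers up to $O(mn)$), which is exactly what produces the additive $\log(8mn/\delta)$ term and the $+1$ discretization penalty inside the square root. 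A final union bound over the two levels yields the claim with probability $1-\delta$. I expect the main obstacle to be this task-level construction: designing the joint prior/posterior so that the KL factorizes \emph{and} the prior simultaneously renders the $f_i$ data-independent (which is what legitimizes the $mn$-sample concentration), together with the careful handling of the free parameter $\lambda$ needed to obtain an adaptive rate that holds uniformly in $\rho$ and $\q$.
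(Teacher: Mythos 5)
Your proposal is correct and follows essentially the same route as the paper's proof: the same decomposition via $\ter(\rho)$, the same application of a standard PAC-Bayes bound at the environment level, the same joint prior/posterior construction over $(A,P,f_1,\dots,f_n)$ with change of measure, Hoeffding's lemma, Markov's inequality, the KL chain-rule factorization into $\KL(\rho\|\pi)+\E_{A\sim\rho}[C_1]$, and a union bound over a grid of $\lambda$ values yielding the $\log(8mn/\delta)+1$ terms. No gaps; this matches the argument in Appendix~\ref{app:proofs}.
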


Our second main result is a tightened variant of Theorem~\ref{theorem:Main_pa}
that holds in the special case that all algorithms share the same hyper-prior, \ie $\p$ is constant.

\begin{theorem}
For any fixed meta-prior $\pi$, fixed hyper-prior $\p$ and any $\delta>0$ 
with probability at least $1-\delta$ over the sampling of the datasets, 
for all distributions $\rho \in\M(\A)$ over algorithms,
and for all hyper-posterior functions $\q: \A \rightarrow \M(\M(\F))$
it holds 
\begin{align}
    \er(\rho) &\leq  \her(\rho) + \sqrt{ \frac{\KL(\rho||\pi) + \log(\frac{4\sqrt{n}}{\delta})}{2n}}  
    \\& \quad+  \ \E_{A \sim \rho} \sqrt{\frac{C_2(A, \q, \p)+ \log(\frac{8mn}{\delta}) + 1}{2mn}}
\label{eq:bound_2}
\end{align}
\begin{align}
C_2(A, \q, \p) &= \KL(\qa \| \p) 
\\&\quad+ \E_{P \sim \qa}  \sum_{i=1}^{n} \KL(A(S_i) || P) \nonumber
\label{eq:complexity_2}
\end{align}
\label{theorem:Main_p}
\end{theorem}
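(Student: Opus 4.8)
The plan is to split the meta-generalization gap into an across-task part and a within-task part, to allocate the confidence budget as $\delta/2+\delta/2$, and to exploit that a constant hyper-prior $\p$ makes the within-task concentration event independent of the algorithm. Note first that this statement is strictly stronger than \cref{theorem:Main_pa} specialized to constant $\p$: the expectation over $\rho$ now sits outside the concave square root and the $\KL(\rho\|\pi)$ term has vanished from the third summand. Hence I cannot obtain it from \cref{theorem:Main_pa} by Jensen and must argue directly. Introducing the intermediate risk $\ter(\rho)=\E_{A\sim\rho}\frac1n\sum_{i=1}^n\er(A(S_i))$, the average true risk of the sampled algorithms on the \emph{observed} tasks, I would write $\er(\rho)-\her(\rho)=\big(\er(\rho)-\ter(\rho)\big)+\big(\ter(\rho)-\her(\rho)\big)$ and bound the two brackets separately.

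For the across-task bracket $\er(\rho)-\ter(\rho)$ I would apply the bound~\eqref{eq:maurer} with the algorithm set $\A$ playing the role of the model space, $\pi,\rho$ as prior and posterior, and $g_A(t)=\er(A(S))\in[0,1]$ as the loss of the ``hypothesis'' $A$ on a task $t=(D,S)$. Since the $n$ training tasks are i.i.d.\ from $\tau$ and $\ter(\rho)=\E_{A\sim\rho}\frac1n\sum_i g_A(t_i)$ is exactly the corresponding empirical mean, this yields with probability at least $1-\delta/2$ the first square-root term, with $\KL(\rho\|\pi)$, denominator $2n$, and constant $\log(4\sqrt n/\delta)$.

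The within-task bracket is where constancy of $\p$ is decisive. Writing $\er_i(f)=\E_{z\sim D_i}\ell(z,f)$ and $\her_i(f)=\frac1m\sum_{z\in S_i}\ell(z,f)$, I would treat all $mn$ points as one (conditionally independent, $[0,1]$-valued) sample and apply a Maurer-type exponential-moment inequality to the aggregate risk of a tuple $(f_1,\dots,f_n)$, obtaining $\E_{P\sim\p}\,\E_{f_1,\dots,f_n\sim P}\exp\!\big(mn\,\mathrm{kl}(\frac1n\sum_i\her_i(f_i)\,\|\,\frac1n\sum_i\er_i(f_i))\big)\le 2\sqrt{mn}$. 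The crucial point is that, after integrating over the data-independent $P\sim\p$, the left-hand side does not depend on $A$, $\rho$, or $\q$; hence a single use of Markov's inequality fixes one event of probability at least $1-\delta/2$ that is simultaneously valid for every algorithm. On this event I would, for each fixed $A$ and each hyper-posterior $\qa$, apply the Donsker--Varadhan change-of-measure from the reference ``$P\sim\p$ then $f_i\sim P$'' to the target ``$P\sim\qa$ then $f_i\sim A(S_i)$''; the KL chain rule turns its cost into exactly $C_2(A,\q,\p)=\KL(\qa\|\p)+\E_{P\sim\qa}\sum_i\KL(A(S_i)\|P)$. Convexity of $\mathrm{kl}$ (to replace sampled models by the stochastic posteriors $A(S_i)$) followed by Pinsker's inequality then converts the kl-bound into $\ter(A)-\her(A)\le\sqrt{\frac{C_2(A,\q,\p)+\log(8mn/\delta)+1}{2mn}}$. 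Because this holds pointwise in $A$, I can finally take $\E_{A\sim\rho}$ outside the square root to recover the third term, and a union bound over the two events finishes the argument.

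The step I expect to be the main obstacle — and the one that distinguishes this theorem from \cref{theorem:Main_pa} — is keeping the within-task event uniform over all $A$ and $\q$ \emph{without} incurring a $\KL(\rho\|\pi)$ penalty. This is possible precisely because the reference distribution (``$P\sim\p$, then models i.i.d.\ from $P$'') is constant in $A$, so no change of measure from $\pi$ to $\rho$ is needed and $\E_{A\sim\rho}$ stays outside the concave root, which is tighter by Jensen. If $\p$ depended on $A$, the exponential moment would itself depend on $A$; one would then have to average it over $A\sim\pi$ before Markov and change measure to $\rho$, reintroducing $\KL(\rho\|\pi)$ inside the root and forcing the expectation inside — exactly how \cref{theorem:Main_pa} arises. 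The remaining work is bookkeeping: aggregating over the combined $mn$-sample (rather than multiplying $n$ per-task moments) so that the confidence term stays logarithmic in $mn$ while the denominator becomes $2mn$, and justifying the Maurer-type moment bound and the double change of measure against a product reference for the data-dependent $\qa$ and $A(S_i)$.
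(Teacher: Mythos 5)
Your proposal is correct and its architecture coincides with the paper's: the same decomposition $\er(\rho)-\her(\rho)=\big(\er(\rho)-\ter(\rho)\big)+\big(\ter(\rho)-\her(\rho)\big)$ with a $\delta/2+\delta/2$ budget; the same treatment of the across-task bracket as an ordinary PAC-Bayes bound over $\A$ with task-level loss $\E_{z\sim D}\ell(z,A(S))$; the same key observation that because the reference measure ``sample $P\sim\p$, then $f_i\sim P$'' is constant in $A$, a single Markov event controls the exponential moment simultaneously for every $A$ and every $\q$, so the change of measure runs only from $(\p,P^{\otimes n})$ to $(\qa,\prod_i A(S_i))$ and its KL chain rule yields exactly $C_2(A,\q,\p)$ with no $\KL(\rho\|\pi)$ term; and the same final step of averaging a pointwise-in-$A$ bound over $A\sim\rho$ outside the square root. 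The one place you deviate is the concentration tool for the within-task bracket: the paper bounds the linear exponential moment $\E\prod_{i}e^{(\lambda/n)\Delta_i(f_i)}\le e^{\lambda^2/(8nm)}$ via Hoeffding's lemma and then removes the fixed-$\lambda$ restriction by a union bound over the grid $\{1,\dots,4mn\}$ followed by an approximate optimization of $\lambda$ (this is precisely where the $\log(8mn/\delta)+1$ numerator originates), whereas you use a Maurer-type $\mathrm{kl}$-moment bound on the aggregated $mn$-sample and finish with Pinsker. Your route dispenses with the $\lambda$-grid and would give the slightly smaller constant $\log(4\sqrt{mn}/\delta)$, but it does require justifying the Maurer moment inequality for independent, \emph{non-identically distributed} $[0,1]$ summands (the conditional means $\E_{z\sim D_i}\ell(z,f_i)$ differ across tasks, so the i.i.d.\ statement does not apply verbatim); the paper's Hoeffding-plus-grid argument sidesteps that caveat at the cost of the discretization. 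Both give the stated bound.
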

We provide proof sketches for both theorems in Section~\ref{sec:proofsketch}.
For complete proofs, see Appendix~\ref{app:proofs}.

\subsection{Discussion} \label{sec:discussion}
In this section, we discuss the properties of the bounds 
and explain the role and benefits of different terms. 
We also highlight the differences between our general setup 
with the more narrow setups of previous works and explain 
how our algorithm applies to existing meta-learning methods.

\paragraph{Complexity terms}
As it is common for PAC-Bayesian meta-learning, the bounds~\eqref{eq:bound_1} 
and \eqref{eq:bound_2} each contain two complexity terms, 
which reflect the two types of generalization required for 
meta-learning guarantees: \emph{task-level generalization}, 
\ie generalization from the observed tasks to future tasks, 
and \emph{within-task generalization} for all training 
tasks (also called \emph{multi-task generalization}).
In the following, we first discuss~\eqref{eq:bound_1} 
in detail and afterwards discuss how~\eqref{eq:bound_2} 
differs from it. 

The first complexity term of~\eqref{eq:bound_1} expresses
the aspect of task-level generalization: it contains the 
$\KL$-divergence between the data-dependent 
meta-posterior and the data-independent meta-prior over 
algorithms. As such, it reflects directly how much the 
choice of learning algorithm is influenced by the data.
In addition, it contains an additional logarithmic term 
that is small for all practical choices of $n$ and $\delta$. 
Both terms are divided by $2n$, meaning that the first 
complexity term decreases with the number of training 
tasks and vanishes (only) for $n\to\infty$. 
It is not affected by the number of training samples per 
task, $m$.
Such a behavior makes sense: the uncertainty about the
task environment, \ie what kind of tasks will appear in 
the future, is reduced with each additional training task, 
but having more data points from the tasks available 
does not provide new information about this aspect. 

The second complexity term of Equation~\eqref{eq:bound_1} 
contains the same $\KL$-divergence term as well as two
additional ones: in simplified form (ignoring the 
expectations over algorithms), the first term relates 
the algorithm's hyper-posterior, $\qa$, to its 
hyper-prior, $\pa$.
This term reflects the amount by which the meta-learner's 
choice of priors depends on the observed data. 
However, the denominator for this complexity term is $2nm$
instead of $2n$ for the first term, indicating that the
hyper-posterior can be adjusted rather flexibly without 
increasing the size of this term too much. 
The final $\KL$-term relates each task's predicted models, 
$A(S_i)$, and its respective prior distributions, $P$.
The sum of these $n$ terms is divided by $2nm$, making the 
term decrease with $m$ and vanish for $m\to\infty$. 
Once again, a term of this type makes sense. It reflects 
the average uncertainty about the true risk for models 
learned from finite data of each training task.
When the number of samples, $m$, per task grows, the 
uncertainty about each task is reduced. 
When just the number, $n$, of training tasks grows, 
however, the amount of data per task remains the 
same, so no reduction of the average per-task uncertainty 
can be expected. 
The remaining terms in the numerator depend only 
logarithmically on this number and $\delta$ and
are negligible in most practical settings. 

More precisely, Equation~\eqref{eq:bound_1} contains 
the expectations of these terms over the actually 
stochastic choice of algorithm and prior distribution.

\paragraph{Hyper-posteriors}
A basic PAC-Bayes bound with a fixed prior would result in separate and independent complexity terms for each task, 
independent of the environment, and will not take into account the relation between training tasks.
Instead, we introduce algorithm-dependent hyper-posteriors, from which we sample priors, 
and are learned specifically for each learning algorithm, shared between all the tasks. 
Therefore, the $n$ complexity terms for each $A$ become $\E_{P \sim \qa} \sum_{i=1}^{n} \KL(A(S_i) || P)$ with the additional cost of $\KL(\qa \| \pa)$,
which improves $n$ terms at the cost of one extra term.

In this formulation, $\qa$ can be seen as a similarity measure for the output of the algorithm. 
The complexity measure is small if for the outputs of the algorithm for $n$ tasks, 
there is a good hyper-posterior to generate priors close to all posteriors. 
Note that we can learn different hyper-posteriors for different algorithms,
and capture these similarities specifically for the outputs of each algorithm.

Note that the hyper-posterior is a mathematical notion,
and the bound holds for all hyper-posteriors at the same time (with high probability).
Its role is to help better capture the relations between tasks when using a specific algorithm.
For a future task, only the meta-posterior would apply and the role of the hyper-posterior by default is implicit.

\paragraph{Difference between the theorems}
The bound~\eqref{eq:bound_2} differs from bound~\eqref{eq:bound_1} 
most in the fact that the term $\KL(\rho\|\pi)$ does not appear 
in the second term, and $\rho$ only appears in the second term 
as the distribution over algorithms (when we take the expectation 
of algorithms in $\E_{A \sim \rho}$). 
The expectation is also moved outside of the square root, which 
makes the bound tighter (Jensen's inequality). 
We attribute the differences mainly to the fact that for an 
algorithm-independent hyper-prior, some steps in the proof
can be performed in a tighter way.
Generally, both bounds agree in their main behavior with 
respect to the number of training tasks and samples.

\paragraph{Comparison with previous works}
The setting of Section~\ref{sec:results} is a strict 
generalization of the setup from previous works where
the meta-learner only learned priors. 
In fact, the latter setting can be recovered from ours
as follows:
let $Q(S, P)$ be the fixed learning rule of a prior-based
meta-learning method. We then define a family of algorithms 
as $\A=\{ Q(\cdot,P) : P\in\M(\F) \}$. With each element of 
$\A$ uniquely determined by a prior $P$, choosing a 
distribution of algorithms is equivalent to choosing 
a distribution over priors. 
Now, by setting $\qa  = \q(P) = \delta_P$ and $\pa  = \p(P) = \delta_P$,
Theorem \ref{theorem:Main_pa} is applicable.
In Section \ref{sec:numerical_comparison} we compare the 
resulting generalization bound numerically to prior ones
in this setting.

Similarly, we can recover the bounds of \citet{scott2023pefll}, 
which transfer algorithms but only allow for a single fixed choice of prior:
for any $A\in\A$ we set $\qa  = \delta_{P_0}$ and $\pa = \delta_{P_0}$,
where $P_0$ is the fixed prior. Again, Theorem \ref{theorem:Main_p} is
now applicable.
This construction also shows that our result not only recovers 
the bound of \citet{scott2023pefll} but improves over it.
The reason is that Theorem~\ref{theorem:Main_p} holds uniformly over 
all (potentially data-dependent) choices of $\q$, of which the construction 
described above is simply a single data-independent choice.

\paragraph{Recovering common meta-learning methods}
As discussed in the introduction, previous works on meta-learning 
that rely on transferring priors over models are not applicable to 
hypernetwork-based, representation-based, or optimization-based 
meta-learning methods because these require different parametrizations 
of their algorithm sets. 

In our framework, expressing these methods is straight-forward.
For the hypernetwork-based methods~\citep{zhao2020meta,scott2023pefll}, 
the set of algorithms is parametrized by the set of hypernetwork weights. 
Consequently, learning the algorithm means training the hypernetwork. 
For representation-based methods~\citep{maurer2009transfer,maurer2016benefit,lee2019meta} parametrizing each algorithm is a feature extractor, such as a linear 
projection or a convolutional network. 
For optimization-based meta-learning~\citep{hochreiter2001learning,ravi2016optimization,li2017meta},
the algorithm set can contain all hyperparameters to be learned,
or the set of all considered optimization procedures, 
\eg in the form of the weights of a recurrent network. 
In all cases, our bound can directly be applicable. 
In fact, based on our bound one might even improve such methods 
by suggesting appropriate (meta-)regularization term.

Another observation is that our framework also allows 
combining different approaches. 
For example, the algorithm set could be parametrized by the starting 
point of an optimization step (\eg the initialization of a network), 
as well as by a regularization term.
The result is a hybrid of methods based on model prototypes and 
on methods based on learning a regularizer.
Prior works were applicable to study either of these approaches 
individually, but not their combination.
Nevertheless, we provide an experimental demonstration that such 
a hybrid approach can be beneficial in Section~\ref{sec:experiments}.

\section{Proof Sketch}\label{sec:proofsketch}
We provide the proof sketch of Theorem \ref{theorem:Main_pa}. 
The full proof and the proof of Theorem \ref{theorem:Main_p} are available in Appendix \ref{app:proofs}.

For the proof we first define an intermediate objective
that represents the true risk of the training tasks,
\begin{align}
    &\ter(\rho) = \E_{A \sim \rho} \frac{1}{n} \sum_{i=1}^{n}\E_{z \sim D_i} \ell (z,A(S_i)).
\end{align}
The proof is then divided into two parts. 
First, 
we bound the difference of the true risks between training tasks and future tasks $\er(\rho) - \ter(\rho)$.
Second, we bound the difference between the true risk and empirical risk of training tasks $\ter(\rho) - \her(\rho)$.
The final result follows by combining the two bounds.

\paragraph{Part I} To bound $\er(\rho) - \ter(\rho)$ we use standard PAC-Bayesian arguments, specifically the following lemma:
\begin{lemma}\label{lemma:part1}
For all $\delta>0$ it holds with probability at least $1 - \frac{\delta}{2}$ over
the sampling of tasks for all meta-posteriors $\rho \in \M(\A)$:
\begin{align}
    &\er(\rho) - \ter(\rho) \le \sqrt{ \frac{\KL(\rho||\pi) + \log(\frac{4\sqrt{n}}{\delta})}{2n}}.
\end{align}
\end{lemma}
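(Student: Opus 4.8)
The plan is to recognize that this inequality is nothing but the standard single-task PAC-Bayesian bound of Equation~\eqref{eq:maurer}, applied one level up: the role of the i.i.d.\ training \emph{samples} is played by the i.i.d.\ training \emph{tasks} $t_1,\dots,t_n\sim\tau$, the role of a \emph{model} is played by a learning \emph{algorithm} $A\in\A$, and the role of the per-sample loss $\ell(z,\cdot)$ is played by the within-task true risk. Concretely, for a fixed algorithm $A$ and a task $t=(D,S)$ I would introduce the quantity
\begin{align}
g_A(t) := \E_{z\sim D}\ell(z,A(S)),
\end{align}
which lies in $[0,1]$ because $\ell$ takes values in $[0,1]$ and expectations preserve this range (recall $\ell(z,A(S))=\E_{f\sim A(S)}\ell(z,f)$). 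With this notation one checks directly from the definitions that $\er(\rho)=\E_{A\sim\rho}\E_{t\sim\tau}\,g_A(t)$ and $\ter(\rho)=\E_{A\sim\rho}\frac1n\sum_{i=1}^n g_A(t_i)$, so $g_A$ plays exactly the role of a $[0,1]$-valued loss of the ``hypothesis'' $A$ evaluated on the ``data point'' $t$, and $\rho\in\M(\A)$ plays the role of the stochastic model.

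Having set this up, I would invoke Equation~\eqref{eq:maurer} with the substitutions $m\rightsquigarrow n$, $\F\rightsquigarrow\A$, $P\rightsquigarrow\pi$, $Q\rightsquigarrow\rho$, the loss $\ell\rightsquigarrow g$, and the confidence $\delta\rightsquigarrow\delta/2$. Since the tasks $t_1,\dots,t_n$ are i.i.d.\ samples from $\tau$ and $\pi$ is a fixed, data-independent prior over $\A$, the hypotheses of the bound are satisfied, and it yields with probability at least $1-\delta/2$, uniformly over all $\rho\in\M(\A)$,
\begin{align}
\er(\rho)\le\ter(\rho)+\sqrt{\frac{\KL(\rho\|\pi)+\log\frac{2\sqrt n}{\delta/2}}{2n}}.
\end{align}
Finally, rewriting $\log\frac{2\sqrt n}{\delta/2}=\log\frac{4\sqrt n}{\delta}$ and rearranging gives precisely the claimed inequality.

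The only point that deserves care is that the ``loss'' $g_A(t)$ is itself an \emph{expectation} over the unobserved distribution $D$, and is therefore not computable in practice. This is, however, irrelevant to the validity of the argument: Equation~\eqref{eq:maurer} only requires a $[0,1]$-valued loss whose i.i.d.\ average concentrates around its mean, not that the loss be evaluable. The uniformity of the result over $\rho$ (and hence its usability as a training objective) is inherited directly from the uniformity of Equation~\eqref{eq:maurer} over $Q$, so no additional union bound or covering argument is needed. The remaining, genuinely data-dependent gap between $g_A(t_i)=\er(A(S_i))$ and the computable empirical risk entering $\her(\rho)$ is exactly what Part~II of the proof handles separately; isolating the two gaps is what makes this first step a clean, essentially off-the-shelf application rather than the main obstacle.
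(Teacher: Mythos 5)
Your proposal is correct and matches the paper's own argument: the appendix proof likewise defines the per-task loss $l_{\text{task}}(A,T)=\E_{z\sim D}\ell(z,A(S))$, observes that $\er(\rho)$ and $\ter(\rho)$ are its true and empirical risks over the i.i.d.\ tasks, and invokes the standard PAC-Bayes bound of \citet{maurer2004note} with confidence $\delta/2$, which yields the $\log(4\sqrt{n}/\delta)$ term exactly as you compute. Your additional remarks on the $[0,1]$ range of the surrogate loss and on its non-computability being irrelevant are correct and only make explicit what the paper leaves implicit.
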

This lemma is an application of applying the standard PAC-Bayesian bounds \citet{maurer2004note, perez2021tighter}.
For details, see Appendix \ref{app:proofs}.

\paragraph{Part II} We define the following two functions which produce distributions over $\A\times\M(\F)\times \F^{\otimes n}$, \ie they assigns joint probabilities to tuples, 
$(A, P, f_1, ..., f_n)$, which contain an algorithm, a prior over models, and $n$ models.
%

\emph{Posterior $\posterior$:} given as input a meta-posterior $\rho$ over algorithms
    and a hyper-posterior mapping $\q$ as input, it outputs the distribution over 
    $\A\times\M(\F)\times \F^{\otimes n}$ with the following generating process:
    \emph{i)} sample an algorithm $A\sim\rho$, \emph{ii)} sample a prior $P\sim\qa$, 
    \emph{iii)} for each task, $i=1,\dots,n$, sample a model $f_i\sim A(S_i)$. 

\emph{Prior $\prior$:} given as input a meta-prior $\pi$ over algorithms and a hyper-prior mapping $\p$ as input, it outputs the distribution over $\A\times\M(\F)\times \F^{\otimes n}$ with the following generating process: \emph{i)} sample an algorithm $A\sim \pi$, 
\emph{ii)} sample a prior $P\sim\pa$, \emph{iii)} for each task, $i=1,\dots,n$, sample 
a model $f_i\sim P$.

Note that the inputs to $\posterior$ are data-dependent and will be learned from data.
In contrast, the input to $\prior$ are data-independent and need to be fixed before seeing the data.

With these definitions, we state the following key lemma:
\begin{lemma}\label{lemma1:part2}
For any fixed meta-prior $\pi$, fixed hyper-prior function $\p$,  
any $\delta > 0$ and any $\lambda>0$, it holds with 
probability at least $1 - \frac{\delta}{2}$ over the 
sampling of the training datasets that for all 
meta-posteriors $\rho \in\M(\A)$ over algorithms, and 
for all hyper-posterior functions $\q: \A \rightarrow \M(\M(\F))$:
\begin{align}
\begin{split}
         \ter(\rho) - \her(\rho)
          \le &\  \ \frac{1}{\lambda} (\KL(\posterior \| \prior)) \\&
           +    \frac{1}{\lambda} \log(\frac{2}{\delta}) +  \frac{\lambda}{8nm}
\end{split}
\end{align}
\end{lemma}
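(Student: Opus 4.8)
The plan is to read Lemma~\ref{lemma1:part2} as one ordinary PAC-Bayesian bound living on the product space $\A\times\M(\F)\times\F^{\otimes n}$, whose ``hypotheses'' are the tuples $(A,P,f_1,\dots,f_n)$ drawn by $\posterior$ and $\prior$, and whose loss is the average per-task generalization gap. First I would rewrite both risks as a single expectation under $\posterior$. Since drawing $(A,P,f_1,\dots,f_n)\sim\posterior$ means $A\sim\rho$, $P\sim\qa$, and $f_i\sim A(S_i)$, and since $\E_{z\sim D_i}\ell(z,A(S_i))=\E_{f_i\sim A(S_i)}\E_{z\sim D_i}\ell(z,f_i)$ (and analogously for the empirical term), I obtain
\[
\ter(\rho)-\her(\rho)=\E_{(A,P,f_1,\dots,f_n)\sim\posterior}\, g(A,P,f_1,\dots,f_n),
\]
with
\[
g(A,P,f_1,\dots,f_n)=\frac{1}{n}\sum_{i=1}^{n}\Big(\E_{z\sim D_i}\ell(z,f_i)-\frac{1}{m}\sum_{z\in S_i}\ell(z,f_i)\Big).
\]
Note that $g$ does not depend on the dummy coordinate $P$, which enters only through the divergence term below.

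Next I would apply the Donsker--Varadhan change-of-measure inequality: for any fixed $\lambda>0$ and any fixed data-independent $\prior$, it holds simultaneously for every $\posterior$ that
\[
\lambda\,\E_{\posterior}[g]\le \KL(\posterior\|\prior)+\log\E_{\prior}\big[e^{\lambda g}\big].
\]
Because $\prior$ is pinned down by the data-independent $\pi$ and $\p$, the exponential-moment term on the right does not depend on $\rho$ or $\q$; this is precisely what delivers the uniformity of the lemma over all meta-posteriors $\rho$ and all hyper-posterior mappings $\q$ at once.

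It then remains to control $\E_{\prior}[e^{\lambda g}]$ with high probability over the sampling of $S_1,\dots,S_n$, which I would do by first bounding it in expectation over the data. The crucial structural point is that under $\prior$ each $f_i$ is drawn from the data-independent $P\sim\pa$, so $f_i$ is independent of $S_i$; therefore Fubini lets me exchange $\E_S$ and $\E_{\prior}$, and the moment generating function factorizes across the $n$ independent tasks. For a fixed $f_i$, the quantity $\E_{z\sim D_i}\ell(z,f_i)-\frac1m\sum_{z\in S_i}\ell(z,f_i)$ is the deviation of an average of $m$ \iid variables in $[0,1]$, so Hoeffding's lemma gives $\E_{S_i}[e^{(\lambda/n)(\cdots)}]\le e^{\lambda^2/(8n^2m)}$; multiplying the $n$ factors yields
\[
\E_S\,\E_{\prior}\big[e^{\lambda g}\big]\le e^{\lambda^2/(8nm)}.
\]
A final application of Markov's inequality to the nonnegative data-dependent quantity $\E_{\prior}[e^{\lambda g}]$ shows that with probability at least $1-\frac{\delta}{2}$ one has $\log\E_{\prior}[e^{\lambda g}]\le\log\frac{2}{\delta}+\frac{\lambda^2}{8nm}$; substituting this into the change-of-measure bound and dividing by $\lambda$ gives the stated inequality.

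I expect the main obstacle to be conceptual rather than computational: the entire argument hinges on the prior drawing $f_i\sim P$ instead of $f_i\sim A(S_i)$, which is what makes the sampled models independent of the datasets and legitimizes both the Fubini exchange and the per-task Hoeffding bound. As a side remark, the chain rule for the KL divergence splits $\KL(\posterior\|\prior)$ into $\KL(\rho\|\pi)+\E_{A\sim\rho}[C_1(A,\q,\p)]$, which is how this lemma later combines with Lemma~\ref{lemma:part1} to yield Theorem~\ref{theorem:Main_pa}.
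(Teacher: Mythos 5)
Your proposal is correct and follows essentially the same route as the paper's own proof: the identity $\ter(\rho)-\her(\rho)=\E_{\posterior}[\frac1n\sum_i\Delta_i(f_i)]$, the change-of-measure inequality against the data-independent $\prior$, the Fubini exchange and per-task factorization enabled by $f_i\sim P$ under the prior, Hoeffding's lemma giving $e^{\lambda^2/(8n^2m)}$ per factor, and Markov's inequality to obtain the high-probability statement. You also correctly identify the crucial structural point (the prior samples models from $P$ rather than from $A(S_i)$), and in fact you state the Hoeffding step slightly more carefully than the paper does, by making explicit that the $1/m$ rate comes from $\Delta_i$ being a deviation of an average of $m$ \iid bounded variables.
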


\begin{proof}
First for any task $i$ and any model $f_i$ we define:
\begin{align}
    \Delta_i(f_i) = \E_{z \sim D_i} \ell (z,f_i) - \frac{1}{|S_i|} \sum_{z\in S_i} \ell (z,f_i).
\end{align} 
By this definition and the definitions of $\ter$ and $\her$ we have:
\begin{align}
    \E_{(A, P, f_1, \dots, f_n) \sim \posterior} &\Big[\frac{1}{n} \sum_{i=1}^{n} \Delta_i(f_i) \Big] = \ter(\rho) - \her(\rho) 
\end{align}

Using this equation and the \emph{change of measure inequality}~\citep{seldin2012pac}
between the two distributions $\posterior$ and $\prior$, for any $\lambda>0$, any $\rho$ and any $\q$, we have:
\begin{align}
\begin{split}
        &\ter(\rho) - \her(\rho) - \frac{1}{\lambda}  \KL(\posterior || \prior)
        \\& \le \frac{1}{\lambda} \log \E_{(A, P, f_1, \dots, f_n) \sim \prior} \prod_{i=1}^{n} e^{\frac{\lambda}{n} \Delta_i(f_i)}
        \label{eq:change_of_measure}
\end{split}
\end{align}
It remains to bound the right-hand of~\eqref{eq:change_of_measure}. Given that $\pi$ and $\p$ are data-independent, standard tools (in particular Hoeffding's lemma and Markov's inequality) allow us to prove an upper bound that holds in high probability with respect to the randomness of training datasets, from which the statement in the lemma follows.
Detailed steps are provided in Appendix \ref{app:proofs}.
\end{proof}

The following lemma provides a split of the $\KL$-term from Lemma~\ref{lemma1:part2}.
\begin{lemma}
    \label{lemma:KL}
    For the posterior and prior defined above we have:
    \begin{align}
        & \KL(\posterior || \prior) =  \KL(\rho \| \pi)  
        \\& + \E_{A \sim \rho} \Bigg[\KL(\qa\| \pa) +\!\! \E_{P \sim \qa}  \sum_{i=1}^{n} \KL(A(S_i) || P)\Bigg]
\notag
\end{align}
\end{lemma}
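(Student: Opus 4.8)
The plan is to exploit the fact that both $\posterior$ and $\prior$ are joint distributions over the same space $\A\times\M(\F)\times\F^{\otimes n}$ that factor according to the identical hierarchical (chain) structure $A \to P \to (f_1,\dots,f_n)$, and then to invoke the chain rule for the Kullback--Leibler divergence. Writing the densities with respect to a common base measure, the posterior factors as (density of $\rho$ at $A$) times (density of $\qa$ at $P$) times $\prod_{i=1}^n$(density of $A(S_i)$ at $f_i$), while the prior factors as (density of $\pi$ at $A$) times (density of $\pa$ at $P$) times $\prod_{i=1}^n$(density of $P$ at $f_i$). The only structural difference is the last factor: under $\posterior$ each $f_i$ is drawn from $A(S_i)$ (conditionally independent of $P$ given $A$), whereas under $\prior$ each $f_i$ is drawn from $P$ (conditionally independent of $A$ given $P$). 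Both are nonetheless legitimate conditionals of $(f_1,\dots,f_n)$ given $(A,P)$, so the chain rule applies verbatim.

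First I would write $\KL(\posterior\|\prior)$ as the expectation under $\posterior$ of the logarithm of the density ratio. By the factorization above, this log-ratio splits into an additive sum of three pieces: $\log(\rho(A)/\pi(A))$, $\log(\qa(P)/\pa(P))$, and $\sum_{i=1}^n \log\big(A(S_i)(f_i)/P(f_i)\big)$. Taking the expectation termwise, the first piece integrates (the remaining variables marginalize out) to $\KL(\rho\|\pi)$; the second piece, after integrating $A\sim\rho$ on the outside, gives $\E_{A\sim\rho}\KL(\qa\|\pa)$; and the third piece, using that the $f_i$ are conditionally independent given $(A,P)$ so that KL is additive over the product measures $\prod_i A(S_i)$ versus $\prod_i P$, gives $\E_{A\sim\rho}\E_{P\sim\qa}\sum_{i=1}^n \KL(A(S_i)\|P)$. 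Summing the three contributions yields exactly the claimed identity.

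I do not expect a genuine obstacle here, as the result is an instance of the chain rule for relative entropy; the only points requiring care are bookkeeping. The first is justifying additivity of the KL divergence across the product-over-tasks factor, which is valid because under both $\posterior$ and $\prior$ the $f_i$ are conditionally independent given $(A,P)$. The second is measure-theoretic: the identity presumes $\posterior\ll\prior$ (equivalently $\rho\ll\pi$, $\qa\ll\pa$ for $\rho$-almost every $A$, and $A(S_i)\ll P$ for the relevant priors $P$), so that the densities exist; should absolute continuity fail, both sides equal $+\infty$ and the statement still holds in the extended reals. Since $\M(\F)$ may be an infinite-dimensional space, I would phrase the argument abstractly in terms of Radon--Nikodym derivatives rather than explicit densities, but the algebraic decomposition is unchanged.
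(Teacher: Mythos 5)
Your proposal is correct and follows essentially the same route as the paper's proof: both write $\KL(\posterior\|\prior)$ as the expectation under $\posterior$ of the log density ratio, exploit the identical hierarchical factorization of the two generating processes to split that log ratio into the three additive pieces, and integrate termwise to obtain $\KL(\rho\|\pi)$, $\E_{A\sim\rho}\KL(\qa\|\pa)$, and $\E_{A\sim\rho}\E_{P\sim\qa}\sum_i\KL(A(S_i)\|P)$. Your additional remarks on absolute continuity and the extended-real-valued case are sound bookkeeping that the paper leaves implicit.
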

The proof makes use of the explicit construction of $\prior$
and $\posterior$. It can be found in Appendix \ref{app:proofs}.

\paragraph{Proof of Theorem \ref{theorem:Main_pa}}
To get tight guarantees, we need to choose the value of $\lambda$ in 
Lemma~\ref{lemma1:part2} an optimal way dependent on the data. As
the statement of the Lemma is not uniform in $\lambda$, we do so 
approximately by allowing a fixed set of values in the range $\{1, ..., 4mn\}$ 
and applying a union-bound argument for values in this set.
The theorem then follows by combining the result with 
Lemma~\ref{lemma:part1} and using Lemma~\ref{lemma:KL}.

\paragraph{Proof sketch of Theorem \ref{theorem:Main_p}}
The proof is similar to the proof of Theorem \ref{theorem:Main_pa}. 
For the first part, we use the same Lemma~\ref{lemma:part1}.
For the second part, we use the fact that we have the same 
data-independent prior for all algorithms. 
Due to this fact, we can remove $\rho$ in the posterior 
function and prove a generalization bound that holds uniformly 
for all algorithms applied to the datasets.
Therefore we can bound the multi-task generalization of 
all meta-posteriors $\rho$ without the term $\KL(\rho\|\pi)$, 
and the result is Theorem~\ref{theorem:Main_p}. 
For the detailed proof, please see Appendix \ref{app:proofs}.

\section{Experimental Demonstration} \label{sec:experiments}
While our main contribution in this work is theoretical,
We also report on two experimental studies 
that allow us to better relate our results to prior work.

\subsection{Numerical Evaluations of the Bound} \label{sec:numerical_comparison}

In this section, we numerically compare the tightness 
of our bound to those from prior work, as far as this
is possible.
We adopt the same scenario as \citet{rothfuss2022pacoh}, 
in which the goal is to improve the learning of 
linear classifiers by means of meta-learning a 
distribution over priors. 

In this experiment, each task is a binary classification task, 
which has a task parameter $\w^*$ 
and given an input $\x \sim {\U([-1, 1]^d)}$ 
outputs $y = \ind({\w^*}^\top \x \le 0)$.
The task environment is the set of vectors 
$\w^*\in\R^{d}$ with task distribution 
$\tau = \N(\w^*|\mu_\T, \sigma^2_\T\cdot\text{Id})$
for $\mu_\T=10 \cdot \mathbf{1}$ 
and $\sigma_\T=3$.
The model set consists of linear classifiers, 
$\F=\{\ind(\w^\top \x \le 0): \w \in \R^d\}$, 
and the priors and posteriors are Gaussian 
distributions over their weight vectors. 
Specifically, the priors have the form 
$\N(\w | \mu_P, \sigma_P^2\cdot\text{Id})$ with $\sigma_P=10$, 
from which the posteriors are learned by minimizing 
a PAC-Bayes bound with the logistic regression loss.
The meta-learner learns a Gaussian hyper-posterior 
over the mean of the priors ($\mu_P)$,  based on the hyper-prior $\p(\mu_P) = \N(\mu_P | \mathbf{0}, \sigma_{\p}^2\cdot\text{Id})$ with $\sigma_{\p}=20$.
For background information on the experimental setting, please see the original reference~\citet{rothfuss2022pacoh}.

In Figure~\ref{fig:num_comparison} we plot the numeric values 
of the right-hand side of our bound~\eqref{eq:bound_2},
\ie empirical loss plus the complexity terms, in the 
setting of $d=2$, $m=5$, and different values of $n$. 
We also plot the corresponding values for the bounds from \citet{pentina2014pac, amit2018meta, rothfuss2021pacoh, rezazadeh2022unified, guan2022fast}, as well as the actual quantities
of interest, the meta-test loss, and the meta-training loss.
One can see that our bound has the smallest value, \ie is 
the tightest. 
It becomes non-vacuous already for $n=10$ tasks, while 
the other bounds are still vacuous for $n < 20$.

We would like to emphasize that this analysis 
compares the tightness of the bounds just in the 
setting of prior-based meta-learning, because this 
is the only setting in which previous works can 
provide a guarantee.
The main advantage of our result, however, is its 
applicability to many more settings, where a numeric
comparison is not possible, because previous works 
are not applicable. 

\begin{figure}[t]
\centering
\includegraphics[width=\linewidth]{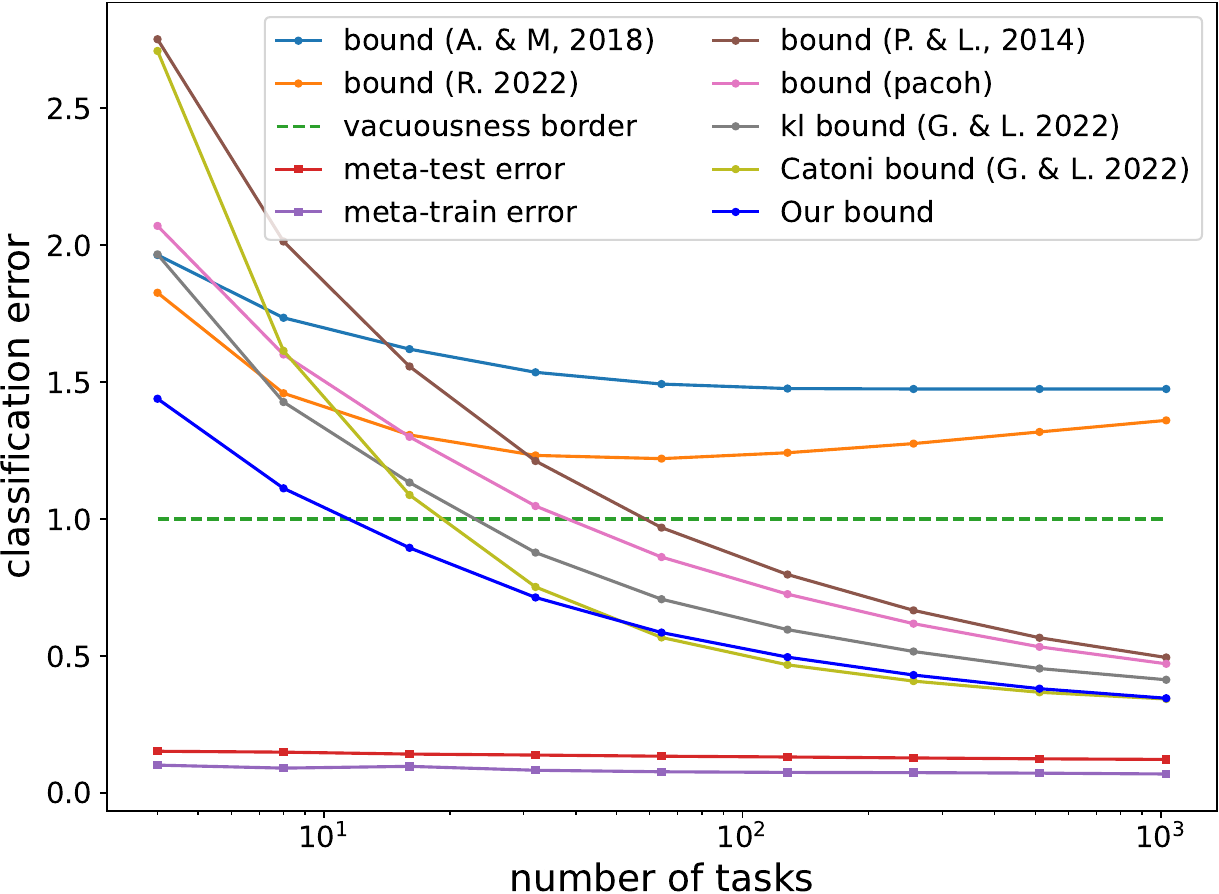}
\caption{Numeric values of different meta-learning bounds (empirical loss plus complexity terms) for the binary classification task described in Section~\ref{sec:numerical_comparison}. 
Values below $1$ are called \emph{non-vacuous}.}
\label{fig:num_comparison}
\end{figure}

\subsection{Learning Initialization as well as Regularization}

In Section~\ref{sec:discussion} we argued that hybrid meta-learning scenarios can be beneficial, for example, learning the initialization of a network as well as a regularization term
for its optimization. 
As empirical evidence for this setting, we report 
on the standard experiments in the literature 
suggested by \citet{amit2018meta},
in which a stochastic neural network is learned for two exemplary meta-learning scenarios, 
\emph{shuffled pixel} and \emph{permuted labels}.
Both are image classification tasks based on the 
MNIST dataset~\citep{mnist}. 
In the former, tasks differ from each other by 
different permutations of the input pixels at 
a fixed subset of locations. 
In the latter, task differ from each other by 
different permutations of the label space. 
In these experiments, there are 10 training tasks with 600 samples per task. We evaluate the methods on 20 tasks with 100 samples per task. 
For experimental details 
see Appendix~\ref{app:experiments}.

Previous works used this setting as a benchmark for 
PAC-Bayesian meta-learning bounds in the following way: 
the learning rule, $Q(S, P)$, consists of first initializing 
a stochastic network at the mean of the prior, $P$, and 
then training the network by minimizing the right-hand side 
of a PAC-Bayes bound using prior $P$, where the $\KL$-divergence between the prior and the learned model acts as a regularizer 
towards the prior mean.
This description shows that the prior is used in two different ways, for initialization and as regularizer, even though it is not a priori clear why the best choice for these two quantities would be to make them identical. 

In our new framework the roles of initialization and 
regularization can easily be separated, thereby 
allowing us to assess the above question quantitatively. 
To explore this, we use a simple formalization, in 
which each algorithm consists of two distributions 
$(P_0, P_1)$ over models. 
For learning a new task, the stochastic neural network is 
initialized from $P_0$, and then trained by minimizing 
a PAC-Bayes bound with $P_1$ as prior using gradient 
descent.
Formally, for each algorithm, we have $\qa=\pa=\delta_{P_1}$
, and $(\rho_0, \rho_1)$ as meta-posterior, and $(\pi_0, \pi_1)$ as meta-prior, where $\rho_0, \pi_0$ are distributions over $P_0$ and $\rho_1, \pi_1$ are distributions over $P_1$.

For simplicity, we work with Gaussian distributions, 
and we learn their mean and variance by the re-parametrization trick of~\citet{kingma2015variational}.
The optimization is performed as in \citet{amit2018meta}:
one approximately minimizes the meta-learning bound by optimizing for the hyper-posterior, $\rho$ and for separate task posteriors, $Q_1, \dots, Q_n$, for a fixed number of epochs (originally 200). 
For our setup, we use the same implementation, but 
with the difference that we have a meta-posterior 
over two distributions. %
For the first 100 epoch we set them equal, mirroring the 
prior work. Afterwards, however, fix the meta-posterior 
over $P_0$, initialize $Q_1, ..., Q_n$ again by sampling 
from this distribution, and continue the optimization 
for another 100 epochs. 
Note that this results in the same amount of computation as the previous methods, but now split into first learn the network initialization, and then the regularization term conditioned on the learned initialization.

\begin{table}[t]
\centering
\caption{Comparison between our mechanism with prior-based mechanisms.
The separation of initialization and regularization improves the performance for \emph{permuted labels (PL)}, and achieves similar performance for the \emph{shuffled pixels (SP)}.}
\smallskip
\label{table:experiment}
\begin{tabular}{ |c||c|c|   }
\hline
 \multicolumn{3}{|c|}{Meta-Learning PAC Bayes Bounds: Test Error (\%)} \\
 \hline
 Bound & SP & PL \\
 \hline  Independent learning  &  \np{28.9} $\pm$ \np{1.740}  & \np{19.6} $\pm$ \np{1.510} \\
  \cite{amit2018meta} & \textbf{\np{9.92} $\pm$ \np{0.871}} & 13.7 $\pm$ \np{3.460} \\
 \cite{rezazadeh2022unified}  & 11.2 $\pm$ \np{1.040} & 90.1 $\pm$ \np{5.580}\\
\cite{guan2022fast} & 20.5 $\pm$ \np{1.110} & 89.9 $\pm$ \np{0.470} \\
 Ours   & \textbf{\np{9.9} $\pm$ \np{1.140}} & \textbf{\np{7.91} $\pm$ \np{1.700}} \\
 \hline
\end{tabular}
\end{table}

We compare our results with the prior-based bounds of \citet{amit2018meta, rezazadeh2022unified, guan2022fast}, as well as independent learning.
The experimental results are shown in Table~\ref{table:experiment}.
One can see that for the \emph{permuted label} setting, having different parameters reduces the test error. 
This shows that the previous setups were indeed suboptimal, and it thereby confirms the benefits of out framework's added flexibility.
In the \emph{shuffled pixel} setting, the added flexibility did
not yield any benefits, as the system learned almost identical parameters for the initialization and the regularization term. 
Consequently, the results of our framework are essentially 
the same as for previous ones. For more discussion on the results we refer the reader to Appendix \ref{app:experiments}.

\section{Conclusion}
We presented a new framework for the theoretical  
analysis of meta-learning (or learning-to-learn) 
methods. 
Where previous approaches were limited to settings 
that can be formulated as learning a prior distribution 
over models, our new approach takes a more direct
approach and formulates the knowledge transfer as 
learning a preference for learning algorithms.
Our main contributions are two PAC-Bayesian generalization 
bounds that are applicable to essentially all existing 
transfer mechanisms, including \emph{model prototypes}, 
\emph{regularization}, \emph{representation learning}, 
\emph{hypernetworks}, and the transfer of 
\emph{optimization methods} or \emph{hyperparameters}, or combinations thereof. 
We believe our approach will prove useful to put more
practical meta-learning methods onto solid theoretical 
foundation, and ideally to inspire improvement, such
as new forms of regularization, especially for the 
low-data regime.

\section*{Impact Statement}
This paper presents work whose goal is to advance the field of Machine Learning. There are many potential societal consequences of our work, none which we feel must be specifically highlighted here.

\bibliography{ms}
\bibliographystyle{icml2024}

\newpage
\appendix
\onecolumn

\section{Proofs} \label{app:proofs}

In this section, we provide the proofs of the results in the main body of the paper. 

\subsection{Proof of Theorem \ref{theorem:Main_pa}.}
For the convenience of the reader we restate Theorem \ref{theorem:Main_pa} here and 
then prove it.

\begin{theorem}\label{theorem:Main_pa-appendix}
For any fixed meta-prior $\pi\in\M(\A)$, fixed hyper-prior mapping 
$\p:\A\to\M(\M(\F))$ and any $\delta>0$, it holds with 
probability at least $1-\delta$ over the sampling of the 
training tasks, that for all meta-posterior distributions 
$\rho \in\M(\A)$ over algorithms, and for all hyper-posterior 
mappings $\q:\A\to\M(\M(\F))$ 
it holds
\begin{align}
    &\er(\rho) \leq \her(\rho) +\sqrt{ \frac{\KL(\rho\|\pi) + \log(\frac{4\sqrt{n}}{\delta})}{2n}}  
+ \sqrt{\frac{\KL(\rho||\pi) + \E_{A \sim \rho}[C_1(A, \q, \p)]+ \log(\frac{8mn}{\delta}) + 1}{2mn}},
\label{eq:bound_1_appendix}
\end{align}
with 
\begin{align}
C_1(A, \q, \p)=&\KL(\qa \| \pa) + \E_{P \sim \qa}  \sum_{i=1}^{n} \KL(A(S_i) || P).
\label{eq:complexity_1-appendix}
\end{align}
\end{theorem}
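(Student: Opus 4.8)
The plan is to reuse the intermediate quantity $\ter(\rho)$ from the proof sketch and decompose the generalization gap as $\er(\rho)-\her(\rho) = [\er(\rho)-\ter(\rho)] + [\ter(\rho)-\her(\rho)]$. The first bracket is the \emph{task-level} gap (generalizing from the $n$ observed tasks to the environment $\tau$), and the second is the \emph{within-task} gap (generalizing from the $m$ samples of each $S_i$ to its distribution $D_i$). I would control each by a separate high-probability statement at confidence $1-\delta/2$ and combine them by a union bound so that both hold simultaneously with probability $1-\delta$.

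\textbf{Part I.} For the task-level gap I would invoke Lemma~\ref{lemma:part1}. The key observation is that, viewing each task $t_i=(D_i,S_i)$ as an \iid{} draw from $\tau$ and each algorithm $A$ as a ``hypothesis,'' the map $A\mapsto\E_{z\sim D_i}\ell(z,A(S_i))\in[0,1]$ is a bounded loss, so $\ter(\rho)$ is exactly the empirical average over the $n$ tasks of a quantity whose expectation is $\er(\rho)$. A standard PAC-Bayes bound with meta-prior $\pi$ and meta-posterior $\rho$ then delivers the $\sqrt{(\KL(\rho\|\pi)+\log(4\sqrt n/\delta))/(2n)}$ term directly, with no dependence on $m$.

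\textbf{Part II (technical core).} Here I would construct the joint distributions $\posterior$ and $\prior$ over $\A\times\M(\F)\times\F^{\otimes n}$ as in the sketch and apply the change-of-measure (Donsker--Varadhan) inequality to the test function $\tfrac{\lambda}{n}\sum_i\Delta_i(f_i)$, yielding~\eqref{eq:change_of_measure}. The crucial step is bounding the prior-side exponential moment $\E_{\prior}\prod_i e^{(\lambda/n)\Delta_i(f_i)}$ in high probability. Since under $\prior$ each $f_i$ is drawn from the data-independent prior $P\sim\pa$ with $A\sim\pi$, the entire distribution $\prior$ is independent of the training datasets; this lets me exchange the expectation over $\mathbf S=(S_1,\dots,S_n)$ with the expectation over $\prior$, factorize over the independent tasks, and apply Hoeffding's lemma to each centered empirical mean to get $\E_{S_i}e^{(\lambda/n)\Delta_i(f_i)}\le e^{\lambda^2/(8mn^2)}$. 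Multiplying over the $n$ tasks gives $\E_{\mathbf S}\E_{\prior}\prod_i e^{(\lambda/n)\Delta_i}\le e^{\lambda^2/(8mn)}$, and Markov's inequality turns this into the $1-\delta/2$ bound $\tfrac1\lambda\log\E_{\prior}\prod_i e^{(\lambda/n)\Delta_i}\le\tfrac1\lambda\log(2/\delta)+\tfrac{\lambda}{8mn}$ of Lemma~\ref{lemma1:part2}. I would then expand $\KL(\posterior\|\prior)$ via Lemma~\ref{lemma:KL}, whose proof is just the chain rule for KL applied to the factorizations $\rho(A)\,\qa(P)\prod_iA(S_i)(f_i)$ and $\pi(A)\,\pa(P)\prod_iP(f_i)$, producing precisely $\KL(\rho\|\pi)+\E_{A\sim\rho}C_1(A,\q,\p)$.

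\textbf{Optimizing $\lambda$ and combining.} Because Lemma~\ref{lemma1:part2} holds only for a fixed $\lambda$, while the minimizer $\lambda^\star=\sqrt{8mn(\KL(\posterior\|\prior)+\log(2/\delta))}$ is data-dependent, I cannot plug it in directly; instead I would instantiate the lemma over the integer grid $\lambda\in\{1,\dots,4mn\}$, union-bound over it (replacing $\log(2/\delta)$ by $\log(8mn/\delta)$), and round $\lambda^\star$ to the nearest grid point, absorbing the rounding slack into the additive $+1$. This converts the two additive terms $\tfrac1\lambda(\cdot)+\tfrac{\lambda}{8mn}$ into the single square root $\sqrt{(\KL(\rho\|\pi)+\E_{A\sim\rho}C_1+\log(8mn/\delta)+1)/(2mn)}$, and adding the Part~I term with the outer union bound gives~\eqref{eq:bound_1_appendix}. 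I expect the main obstacle to be exactly this Part~II moment argument: getting the order of the $\E_{\mathbf S}$ and $\E_{\prior}$ expectations right, which hinges entirely on the data-independence of $\prior$, and checking that the $\lambda$-grid union bound produces the clean $\log(8mn/\delta)+1$ term rather than a weaker one.
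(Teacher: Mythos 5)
Your proposal is correct and follows essentially the same route as the paper's own proof: the same $\er-\ter$ / $\ter-\her$ decomposition, the same change-of-measure argument on the joint distributions $\posterior$ and $\prior$ with Hoeffding's lemma and Markov's inequality exploiting the data-independence of $\prior$, the same KL chain-rule expansion, and the same union bound over the grid $\{1,\dots,4mn\}$ for $\lambda$. The only detail you gloss over is the degenerate case $\lambda^\star>4mn$, where the paper simply notes the resulting square-root term already exceeds $1$ so the bound holds trivially for a $[0,1]$-valued loss.
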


The beginning of the proof coincides with the steps of the sketch in Section~\ref{sec:proofsketch},
while the later part provides additional details. 
As a reminder, we repeat the definitions of our main objects of interest: the risk of a meta-posterior, $\rho\in\M(\A)$, 
\begin{align}
    \er(\rho) &= \E_{A \sim \rho} \E_{(S,D)\sim\tau}\E_{z \sim D} \ell (z, A(S)),
    \intertext{its empirical analog,}
    \her(\rho) &= \E_{A \sim \rho} \frac{1}{n} \sum_{i=1}^{n}\frac{1}{|S_i|}\sum_{z\in S_i} \ell (z,A(S_i))
\intertext{as well as the intermediate objective, which represents the true risk of the training tasks:}
    \ter(\rho) &= \E_{A \sim \rho} \frac{1}{n} \sum_{i=1}^{n}\E_{z \sim D_i} \ell (z, A(S_i)).
\end{align}

We divide the proof into two parts. First, we bound the difference of the true risks between training tasks and future tasks $\er(\rho) - \ter(\rho)$. For the second part, we bound the difference between the true risk and empirical risk of training tasks $\ter(\rho) - \her(\rho)$, and by combining the two bounds we obtain the final result.

\paragraph{Part I} For the first part we can use classical PAC-Bayes arguments, because 
 $\er(\rho)$ and $\ter(\rho)$ differ only in the fact that one is an empirical average
 with respect to the tasks while the other it is expectation. 
 Consequently, one obtains:

\begin{lemma}\label{lemma:part1-appendix}
For all $\delta>0$ it holds with probability at least $1 - \frac{\delta}{2}$ over
the sampling of tasks that for all meta-posteriors $\rho \in \M(\A)$:
\begin{align}
    &\er(\rho) - \ter(\rho) \le \sqrt{ \frac{\KL(\rho||\pi) + \log(\frac{4\sqrt{n}}{\delta})}{2n}}.
\end{align}
\end{lemma}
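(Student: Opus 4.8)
The plan is to recognize this statement as a direct instance of the standard single-level PAC-Bayesian bound \eqref{eq:maurer}, only applied at the level of \emph{tasks} rather than of individual samples. First I would define, for each algorithm $A\in\A$ and each task $t=(D,S)$, the quantity
\begin{align}
g_A(t) := \E_{z\sim D}\ell(z,A(S)) = \E_{z\sim D}\E_{f\sim A(S)}\ell(z,f),
\end{align}
which plays the role of the loss incurred by the ``hypothesis'' $A$ on the ``data point'' $t$. Since $\ell$ takes values in $[0,1]$ and $g_A(t)$ is merely an average of such values over $z$ and $f$, it too lies in $[0,1]$, so the boundedness assumption underlying the classical bound is met.

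With this identification the two quantities of interest become the true and empirical PAC-Bayes risks of the stochastic hypothesis $\rho$ over the hypothesis class $\A$: namely $\er(\rho)=\E_{A\sim\rho}\E_{t\sim\tau}\,g_A(t)$ is the expected risk, and $\ter(\rho)=\E_{A\sim\rho}\tfrac1n\sum_{i=1}^n g_A(t_i)$ is the empirical risk on the sample $t_1,\dots,t_n$. Because the training tasks are drawn \iid from $\tau$, the $t_i$ form a genuine \iid sample, and because the meta-prior $\pi\in\M(\A)$ is fixed and data-independent by assumption, every precondition of the classical theorem holds. I would then invoke the bound of \citet{maurer2004note} (or the tightened form of \citet{perez2021tighter}) under the substitutions $\F\mapsto\A$, $m\mapsto n$, $P\mapsto\pi$, $Q\mapsto\rho$, and loss $g_A$. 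The only bookkeeping is the confidence level: to obtain probability at least $1-\tfrac{\delta}{2}$ instead of $1-\delta$, I replace $\delta$ by $\delta/2$, which turns the logarithmic term $\log(2\sqrt{n}/\delta)$ into $\log(4\sqrt{n}/\delta)$, matching the claimed right-hand side exactly.

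I do not expect a genuine obstacle here; the entire content lies in observing that the task-level generalization problem is formally identical to ordinary supervised PAC-Bayes once the per-task true risk $g_A$ is treated as a bounded loss. The two points that warrant care are verifying $g_A(t)\in[0,1]$, which licenses the Hoeffding/sub-Gaussian ingredient behind \eqref{eq:maurer}, and confirming that the invoked bound holds \emph{uniformly} over all posteriors $\rho\in\M(\A)$ simultaneously---this is precisely the standard guarantee of the PAC-Bayes theorem, and it is what later permits the right-hand side to be minimized over $\rho$ while retaining the guarantee.
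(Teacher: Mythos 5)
Your proposal is correct and matches the paper's own argument exactly: the paper likewise defines the task-level loss $l_{\text{task}}(A,T)=\E_{z\sim D}\ell(z,A(S))$, treats the \iid tasks as the sample and $\A$ as the hypothesis class, and applies the bound of \citet{maurer2004note, perez2021tighter} at confidence $1-\delta/2$. Your write-up is in fact more explicit than the paper's about the boundedness check and the substitutions, but the route is identical.
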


\begin{proof}
For each algorithm $A$ and task $T=(D, S)$ we define the loss as $l_\text{task}(A, T) = \E_{z \sim D} \ell (z, A(S))$. For this loss, $\ter(A)$ is the empirical risk, and $\er(A)$ is the true risk. 
Applying the standard PAC-Bayes bounds \citep{maurer2004note, perez2021tighter} 
to this setting results in this lemma.
\end{proof}

\paragraph{Part II}
We define the following two functions that produce distributions over 
$\A\times\M(\F)\times \F^{\otimes n}$, \ie they assigns joint probabilities 
to tuples, $(A, P, f_1, ..., f_n)$, which contain a algorithm, 
a prior over models, and $n$ models.

\begin{itemize}
    \item \emph{Posterior $\posterior$:} given as input a meta-posterior $\rho$ over algorithms
    and a hyper-posterior mapping $\q$ as input, it outputs the distribution over 
    $\A\times\M(\F)\times \F^{\otimes n}$ with the following generating process:
    \emph{i)} sample an algorithm $A\sim\rho$, \emph{ii)} sample a prior $P\sim\qa$, 
    \emph{iii)} for each task, $i=1,\dots,n$, sample a model $f_i\sim A(S_i)$. 
\item \emph{Prior $\prior$:} given as input a meta-prior $\pi$ over algorithms and a hyper-prior mapping $\p$ as input, it outputs the distribution over $\A\times\M(\F)\times \F^{\otimes n}$ with the following generating process: \emph{i)} sample an algorithm $A\sim \pi$, 
\emph{ii)} sample a prior $P\sim\pa$, \emph{iii)} for each task, $i=1,\dots,n$, sample 
a model $f_i\sim P$.
\end{itemize}

Note that the inputs to $\posterior$ are data-dependent and will be learned using the data.
In contrast, the input to $\prior$ are data-independent and need to be fixed before seeing the data.
With these definitions, we state the following key lemma:

\begin{lemma}\label{lemma1:part2-appendix}
For any fixed meta-prior $\pi\in\M(\A)$, fixed hyper-prior mapping 
$\p:\A\to\M(\M(\F))$, any $\delta>0$, and any $\lambda>0$, it holds with 
probability at least $1 - \frac{\delta}{2}$ over the 
sampling of the training datasets that for all 
meta-posteriors $\rho \in\M(\A)$ over algorithms, and 
for all hyper-posterior functions $\q: \A \rightarrow \M(\M(\F))$:
\begin{align}
         \ter(\rho) - \her(\rho)
          &\leq \frac{1}{\lambda} \KL\big(\posterior \| \prior\big) + \frac{1}{\lambda} \log(\frac{2}{\delta}) +  \frac{\lambda}{8nm}.
            \label{eq:lemma1:part2-appendix1}
\end{align}
\end{lemma}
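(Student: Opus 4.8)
The plan is to follow the skeleton already indicated in the proof sketch: introduce the centered per-task deviations, use a single change-of-measure step to pass from $\posterior$ to the data-independent $\prior$, and then control the resulting exponential moment in high probability. First I would define, for each task $i$ and model $f_i$, the deviation
\begin{align}
\Delta_i(f_i) = \E_{z\sim D_i}\ell(z,f_i) - \frac{1}{|S_i|}\sum_{z\in S_i}\ell(z,f_i),
\end{align}
and observe that, by the generating recipe of $\posterior$ (draw $A\sim\rho$, $P\sim\qa$, $f_i\sim A(S_i)$) together with the definitions of $\ter$ and $\her$, we have $\E_{(A,P,f_1,\dots,f_n)\sim\posterior}[\frac{1}{n}\sum_{i=1}^n\Delta_i(f_i)] = \ter(\rho)-\her(\rho)$. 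Applying the change-of-measure inequality to $g=\frac{\lambda}{n}\sum_i\Delta_i(f_i)$ with the pair $(\posterior,\prior)$ then yields, deterministically for every $\rho$ and every $\q$,
\begin{align}
\ter(\rho)-\her(\rho) \le \frac{1}{\lambda}\KL(\posterior\|\prior) + \frac{1}{\lambda}\log\Phi, \qquad \Phi := \E_{(A,P,f_1,\dots,f_n)\sim\prior}\prod_{i=1}^n e^{\frac{\lambda}{n}\Delta_i(f_i)}.
\end{align}

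It is worth noting at once that $\Phi$ depends only on the data-independent inputs $\pi,\p$ and on the datasets, but not on $\rho$ or $\q$; this is exactly why the final high-probability statement can be made uniform over all $\rho$ and $\q$ simultaneously, since only $\Phi$ carries the randomness. The crux will therefore be to bound $\Phi$ in high probability. I would take the expectation over the draw of the datasets and exploit that, under $\prior$, the models are generated as $f_i\sim P$ independently of the data, so that the data-expectation and the $\prior$-expectation may be exchanged and the product factorizes across tasks:
\begin{align}
\E_{S_1,\dots,S_n}[\Phi] = \E_{A\sim\pi}\E_{P\sim\pa}\prod_{i=1}^n \E_{f_i\sim P}\,\E_{S_i}\big[e^{\frac{\lambda}{n}\Delta_i(f_i)}\big].
\end{align}

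For each fixed $f_i$, the quantity $\Delta_i(f_i)$ is a mean-zero average of $m=|S_i|$ independent terms $\E_{z'\sim D_i}\ell(z',f_i)-\ell(z,f_i)$, each lying in an interval of length $1$ because $\ell\in[0,1]$. Hoeffding's lemma applied termwise gives $\E_{S_i}[e^{\frac{\lambda}{n}\Delta_i(f_i)}] \le e^{\lambda^2/(8n^2m)}$, and multiplying the $n$ task factors yields $\E_{S_1,\dots,S_n}[\Phi]\le e^{\lambda^2/(8nm)}$. A single application of Markov's inequality then shows that with probability at least $1-\frac{\delta}{2}$ over the datasets, $\Phi \le \frac{2}{\delta}e^{\lambda^2/(8nm)}$, so that $\frac{1}{\lambda}\log\Phi \le \frac{1}{\lambda}\log\frac{2}{\delta}+\frac{\lambda}{8nm}$; substituting into the change-of-measure inequality gives the lemma.

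The hard part will be the interchange of the data-expectation with the $\prior$-expectation: it is legitimate precisely because the sampling rule for $\prior$ uses $f_i\sim P$ rather than $f_i\sim A(S_i)$, making every ingredient data-independent, and it is this structural choice that renders the bound uniform in $\rho$ and $\q$ for free. The second delicate point is keeping the Hoeffding constants consistent across the three levels (single sample, single task, all tasks), since this is what makes the extra factor of $n$ cancel and produces the term $\frac{\lambda}{8nm}$ with the correct denominator; this bookkeeping is the step most prone to error.
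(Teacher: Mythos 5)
Your proposal is correct and follows essentially the same route as the paper's proof: the same change-of-measure step from $\posterior$ to $\prior$, the same exchange of the data-expectation with the $\prior$-expectation (valid because $\prior$ samples $f_i\sim P$ data-independently), the same per-term Hoeffding bound yielding $e^{\lambda^2/(8n^2m)}$ per task, and a single Markov inequality to make the moment bound hold in high probability uniformly over $\rho$ and $\q$. Your termwise application of Hoeffding's lemma to the $m$ independent summands is in fact slightly more careful than the paper's phrasing, but the resulting constants and conclusion are identical.
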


\begin{proof}
First for any task $i$ and any model $f_i$ we define:
\begin{align}
    \Delta_i(f_i) = \E_{z \sim D_i} \ell (z,f_i) - \frac{1}{|S_i|} \sum_{z\in S_i} \ell (z,f_i).
\end{align} 
By this definition and the definitions of $\ter$ and $\her$ we have:
\begin{align}
    \E_{(A, P, f_1, \dots, f_n) \sim \posterior} &\Big[\frac{1}{n} \sum_{i=1}^{n} \Delta_i(f_i) \Big] = \ter(\rho) - \her(\rho) 
\end{align}

Using this equation and the \emph{change of measure inequality}~\citep{seldin2012pac} between the two distributions $\posterior$ and $\prior$, for any $\lambda>0$, any $\rho\in\M(\A)$ and any $\q$, we have:
\begin{align}
\begin{split}
        &\ter(\rho) - \her(\rho) - \frac{1}{\lambda}  \KL(\posterior || \prior)
        \le \frac{1}{\lambda} \log \E_{(A, P, f_1, f_2, ..., f_n) \sim \prior} \prod_{i=1}^{n} e^{\frac{\lambda}{n} \Delta_i(f_i)}
        \label{eq:change_of_measure_app1}
\end{split}
\end{align}

where the second inequality is due to the \emph{change of measure inequality}~\citep{seldin2012pac}.

By the construction of $\prior$, we have
\begin{align}    
\E_{S_1, \dots, S_n} \E_{(A, P, f_1, f_2, ..., f_n) \sim \prior} \prod_{i=1}^{n} e^{\frac{\lambda}{n} \Delta_i(f_i)} = &\E_{S_1, \dots, S_n} \E_{A \sim \pi} \E_{P \sim \pa} \E_{f_1 \sim P} \dots \E_{f_n \sim P} \prod_{i=1}^{n} e^{\frac{\lambda}{n} \Delta_i(f_i)},
\intertext{and, because it is independent of $S_1,\dots,S_n$, we can rewrite this as}
&= \E_{A \sim \pi} \E_{P \sim \pa} \E_{S_1} \E_{f_1 \sim P} e^{\frac{\lambda}{n} \Delta_1(f_1)} \dots \E_{S_n}\E_{f_n \sim P} e^{\frac{\lambda}{n} \Delta_n(f_n)}.
\\
&= \E_{A \sim \pi} \E_{P \sim \pa} 
\prod_{i=1}^n \E_{S_i} \E_{f_i \sim P} e^{\frac{\lambda}{n} \Delta_i(f_i)}
\label{eq:reordering}
\end{align}
Each $\Delta_i(f_i)$ is a bounded random variable with support in 
an interval of size $1$. By Hoeffding's lemma we have 
\begin{align}
    \E_{S_i}\E_{f_i \sim P} e^{\frac{\lambda}{n} \Delta_i(f_i)} \le e^{\frac{\lambda^2}{8n^2m}}.
    \label{eq:hoeffding1}
\end{align}
Therefore, by combining \eqref{eq:reordering} and \eqref{eq:hoeffding1} we have:
\begin{align}    
\E_{S_1, \dots, S_n} \E_{(A, P, f_1, f_2, ..., f_n) \sim \prior} \prod_{i=1}^{n} e^{\frac{\lambda}{n} \Delta_i(f_i)} \le e^{\frac{\lambda^2}{8nm}}.
\end{align}
By Markov's inequality, for any $\epsilon>0$ we have
\begin{align}
    \mathbb{P}_{S_1, \dots, S_n}\Big(\E_{(A, P, f_1, f_2, ..., f_n) \sim \prior} \prod_{i=1}^{n} e^{\frac{\lambda}{n} \Delta_i(f_i)} \ge e^\epsilon\Big) \le e^{\frac{\lambda^2}{8nm} - \epsilon} \label{eq:markov1}
\end{align}
Hence by combining \eqref{eq:change_of_measure_app1} and  \eqref{eq:markov1} we get for any $\epsilon$:
\begin{align}
        &\mathbb{P}_{S_1, \dots, S_n}\Big(\exists{\rho, \q}: \ter(\rho) - \her(\rho) - \frac{1}{\lambda}  \KL(\posterior || \prior) \ge \frac{1}{\lambda} \epsilon\Big) \le e^{\frac{\lambda^2}{8nm} - \epsilon},
\end{align}
or, equivalently, it holds for any $\delta>0$ with probability at least $1 - \frac{\delta}{2}$:
\begin{align}
        \forall{\rho, \q}: \ter(\rho) - \her(\rho) \le \frac{1}{\lambda}  \KL(\posterior || \prior) + \frac{1}{\lambda} \log(\frac{2}{\delta}) +  \frac{\lambda}{8nm}.
        \label{eq:bigKL-appendix}
\end{align}
\end{proof}

The following lemma splits the $\KL$ term of \eqref{eq:bigKL-appendix} into more interpretable quantities.
\begin{lemma}
    For the posterior, $\posterior$, and prior, $\prior$, defined above it holds:
    \begin{align}
        & \KL(\posterior || \prior) =  \KL(\rho \| \pi)  
         + \E_{A \sim \rho} \Big[\KL(\qa\| \pa) + \E_{P \sim \qa}  \sum_{i=1}^{n} \KL(A(S_i) || P)\Big].
    \end{align}
    \label{lemma:KL_app}
\end{lemma}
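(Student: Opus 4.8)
The plan is to recognize this identity as an instance of the \emph{chain rule} for the KL divergence, applied to the parallel hierarchical descriptions of $\posterior$ and $\prior$. Both distributions are generated by a three-level sampling process, and crucially they factor in exactly the same pattern: a marginal over the algorithm $A$, a conditional over the prior $P$ given $A$, and a conditional over the models $(f_1,\dots,f_n)$ given $(A,P)$. Letting $q$ and $p$ denote the densities of $\posterior$ and $\prior$ with respect to a common base measure, this factorization reads
\begin{align}
q(A,P,f_1,\dots,f_n) &= \rho(A)\,\qa(P)\prod_{i=1}^n A(S_i)(f_i), \\
p(A,P,f_1,\dots,f_n) &= \pi(A)\,\pa(P)\prod_{i=1}^n P(f_i),
\end{align}
where $A(S_i)(f_i)$ is the density of the posterior $A(S_i)$ evaluated at $f_i$.

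First I would take the logarithm of the ratio $q/p$. Since each density is a product of three factors, the log-ratio splits additively into $\log\tfrac{\rho(A)}{\pi(A)}$, $\log\tfrac{\qa(P)}{\pa(P)}$, and $\sum_{i=1}^n \log\tfrac{A(S_i)(f_i)}{P(f_i)}$. Taking the expectation over $(A,P,f_1,\dots,f_n)\sim\posterior$ and using linearity of expectation, I would evaluate the three contributions separately, exploiting that each integrand depends only on the variables it names so that the remaining variables can be marginalized away.

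The first contribution is $\E_{A\sim\rho}\log\tfrac{\rho(A)}{\pi(A)}=\KL(\rho\|\pi)$, since only $A$ appears. The second is $\E_{A\sim\rho}\E_{P\sim\qa}\log\tfrac{\qa(P)}{\pa(P)}=\E_{A\sim\rho}\KL(\qa\|\pa)$, where the inner expectation over $P\sim\qa$ produces, for each fixed $A$, exactly the KL divergence between the hyper-posterior and the hyper-prior. For the third contribution, conditioned on $(A,P)$ the models are drawn independently with $f_i\sim A(S_i)$ under $\posterior$, so $\E_{f_i\sim A(S_i)}\log\tfrac{A(S_i)(f_i)}{P(f_i)}=\KL(A(S_i)\|P)$; summing over $i$ and taking the outer expectations over $A\sim\rho$ and $P\sim\qa$ yields $\E_{A\sim\rho}\E_{P\sim\qa}\sum_{i=1}^n\KL(A(S_i)\|P)$. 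Adding the three pieces gives the claimed identity.

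Equivalently, and perhaps more cleanly, I would apply the chain rule of KL divergence twice — first splitting off the marginal over $A$, then the conditional over $P$ given $A$ — and invoke additivity of KL over product measures to dispatch the $n$ conditionally independent models in a single step. The only point requiring care is measure-theoretic: the underlying spaces $\A$, $\M(\F)$, and $\F$ are abstract, so I would either fix a common dominating measure to justify the density manipulations above, or phrase the argument abstractly through the chain rule for regular conditional distributions, which holds at this level of generality. I expect this bookkeeping — rather than any analytic difficulty — to be the main, and only mild, obstacle.
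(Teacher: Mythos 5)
Your proposal is correct and matches the paper's own proof: the paper likewise writes $\KL(\posterior\|\prior)$ as the expectation under the posterior of the log-ratio of the factored densities $\rho(A)\,\qa(P)\prod_i A(S_i)(f_i)$ and $\pi(A)\,\pa(P)\prod_i P(f_i)$, splits the logarithm into three additive pieces, and identifies each piece with the corresponding KL term. The chain-rule phrasing and the remark on a common dominating measure are additional care beyond what the paper spells out, but the substance is identical.
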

\begin{proof}
    \begin{align}
        \KL(\posterior || \prior) &=  \E_{A \sim \rho} \Bigg[\E_{P \sim \qa} \Big[ \E_{f_i \sim A(S_i)} \ln \frac{\rho(A) \qa(P) \prod_{i=1}^n A(S_i)(f_i)}{\pi(A) \pa(P) \prod_{i=1}^n P(f_i)} \Big] \Bigg]
            \\& = \E_{A \sim \rho} \Big[\ln \frac{\rho(A)}{\pi(A)}\Big] +  \E_{A \sim \rho} \Bigg[\E_{P \sim \qa} \Big[\ln \frac{\qa(P)}{\pa(P)} \Big] + \E_{P \sim \qa} \Big[\sum_{i=1}^n \E_{f_i \sim A(S_i)} \ln \frac{A(S_i)(f_i)}{P(f_i)} \Big] \Bigg]
            \\& = \KL(\rho \| \pi)  
         + \E_{A \sim \rho} \Bigg[\KL(\qa\| \pa) + \E_{P \sim \qa}  \sum_{i=1}^{n} \KL(A(S_i) || P)\Bigg].
    \end{align}
\end{proof}

\paragraph{Part III} We now combine the above parts to prove Theorem \ref{theorem:Main_pa}.

\begin{proof}
To get tight guarantees, we need to choose the value of $\lambda$ 
in Lemma~\ref{lemma1:part2-appendix} an optimal data-dependent way, 
but the statement of the Lemma holds only for individual values 
of $\lambda$.
Therefore, we first create an version of inequality~\eqref{eq:lemma1:part2-appendix1} 
by instantiating it for each $\lambda \in \Lambda$ with $\Lambda=\{1, \dots, 4mn\}$,
and then applying a union-bound.
It follows that 
\begin{align}
        \mathbb{P}_{S_1, \dots, S_n}&\Big(\forall{\rho, \q, \lambda \in \Lambda}: \ter(\rho) - \her(\rho) \le \frac{1}{\lambda}  \Big[\KL\big(\posterior || \prior\big) + \frac{1}{\lambda} \log(\frac{8mn}{\delta})\Big] +  \frac{\lambda}{8nm}\Big) \ge 1 - \frac{\delta}{2}
\end{align}
Note that for real-valued $\lambda>1$, it holds that $\lfloor \lambda \rfloor \le \lambda$ and $\frac{1}{\lfloor \lambda \rfloor} \le \frac{1}{\lambda - 1}$. %
Thereby, we can allow real-valued $\lambda$ and obtain
\begin{align}
        \mathbb{P}_{S_1, \dots, S_n}&\Big(\forall{\rho, \q, \lambda \in (1, 4mn]}: \ter(\rho) - \her(\rho)
         \leq \underbrace{\frac{1}{\lambda - 1}  \Big[\KL\big(\posterior || \prior\big) + \log(\frac{8mn}{\delta}) \Big] +  \frac{\lambda}{8mn}}_{=:\Gamma(\lambda)}\Big) \ge 1 - \frac{\delta}{2}\label{eq:union_bound}
\end{align}
For any choice of $\rho, \q$, let $\lambda^*=
\sqrt{8mn (\KL(\posterior || \prior) + \log(\frac{8mn}{\delta}))+1}$.
If $\lambda^*>4mn$, that implies 
\begin{align}
    \sqrt{\frac{\KL(\posterior || \prior) + \log(\frac{8mn}{\delta}) + 1}{2mn}} > 1.
\end{align} 
Otherwise, $\lambda^*\in(1,4mn]$, so inequality~\eqref{eq:union_bound} holds, and we have 
\begin{align}
    \Gamma(\lambda^*) < \sqrt{\frac{\KL(\posterior || \prior) + \log(\frac{8mn}{\delta}) + 1}{2mn}}.
\end{align} 
Therefore,
\begin{align}
        &\mathbb{P}_{S_1, \dots, S_n}\Big(\forall{\rho, \q}:  \ter(\rho) - \her(\rho)
        \le \sqrt{\frac{\KL(\posterior || \prior) + \log(\frac{8mn}{\delta}) + 1}{2mn}}\Big) \ge 1 - \frac{\delta}{2}.
\end{align}
In combination with Lemma \ref{lemma:KL_app}, 
with probability at least $1 - \frac{\delta}{2}$ it holds for all $\rho\in\M(\A), \q:\A\to\M(\F)$:
\begin{align}
        &\ter(\rho) - \her(\rho)
        \le  
         \sqrt{\frac{\KL(\rho||\pi) + \E_{A \sim \rho}[C_1(A, \q, \p)]+ \log(\frac{8mn}{\delta}) + 1}{2mn}}.
        \label{eq:ter-her}
\end{align}  
where $C_1(A, \q, \p)$ is defined as in \eqref{eq:complexity_1-appendix}.
Combining \eqref{eq:ter-her} and Lemma~\ref{lemma:part1-appendix} by a union bound concludes the proof. 
\end{proof}

\subsection{Proof of Theorem \ref{theorem:Main_p}.}
We now restate and prove Theorem~\ref{theorem:Main_p}.

\begin{theorem}\label{theorem:Main_p-appendix}
For any fixed meta-prior $\pi\in\M(\A)$, any fixed hyper-prior 
$\p\in\M(\M(\F))$ and any $\delta>0$, it holds with probability 
at least $1-\delta$ over the sampling of the datasets, 
that for all meta-posterior distributions $\rho \in\M(\A)$ over algorithms,
and for all hyper-posterior functions $\q: \A \rightarrow \M(\M(\F))$
we have
\begin{align}
    \er(\rho) &\leq \her(\rho) + \sqrt{ \frac{\KL(\rho||\pi) + \log(\frac{4\sqrt{n}}{\delta})}{2n}}  
    +  \ \E_{A \sim \rho} \sqrt{\frac{C_2(A, \q, \p)+ \log(\frac{8mn}{\delta}) + 1}{2mn}},
\label{eq:bound_2-appendix}
\intertext{with}
C_2(A, \q, \p) &= \KL(\qa \| \p)  + \E_{P \sim \qa}  \sum_{i=1}^{n} \KL(A(S_i) || P).
\label{eq:complexity_2-appendix}
\end{align}
\end{theorem}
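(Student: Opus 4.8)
The plan is to keep the two-part structure from the proof of Theorem~\ref{theorem:Main_pa-appendix}, reusing Lemma~\ref{lemma:part1-appendix} unchanged for the task-level part (it is identical in both theorems, as it does not involve the hyper-prior), and to rework only the within-task part so as to exploit that $\p$ is now a single fixed distribution rather than an algorithm-dependent mapping. Writing the per-algorithm multi-task risks as
\begin{align}
\ter_A = \frac{1}{n} \sum_{i=1}^{n}\E_{z \sim D_i} \ell (z, A(S_i)), \qquad \her_A = \frac{1}{n} \sum_{i=1}^{n}\frac{1}{|S_i|}\sum_{z\in S_i} \ell (z,A(S_i)),
\end{align}
so that $\ter(\rho)=\E_{A\sim\rho}\ter_A$ and $\her(\rho)=\E_{A\sim\rho}\her_A$, the goal of the second part becomes to bound $\ter_A - \her_A$ uniformly over all $A$ and all $\qa$.

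First I would fix an algorithm $A$ and apply the change-of-measure inequality between the per-algorithm posterior $\post$ (sample $P\sim\qa$, then $f_i\sim A(S_i)$) and the per-algorithm prior $\pri$ (sample $P\sim\p$, then $f_i\sim P$), both now distributions over $\M(\F)\times\F^{\otimes n}$ only. This gives, for any $\lambda>0$,
\begin{align}
\ter_A - \her_A \le \frac{1}{\lambda}\KL(\post\|\pri) + \frac{1}{\lambda}\log\E_{(P,f_1,\dots,f_n)\sim\pri}\prod_{i=1}^{n}e^{\frac{\lambda}{n}\Delta_i(f_i)},
\end{align}
with $\Delta_i$ as in the proof of Lemma~\ref{lemma1:part2-appendix}. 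The decisive observation is that because $\p$ is constant, $\pri$ does not depend on $A$, so the moment-generating factor on the right is the \emph{same} random variable for every algorithm. Bounding it by Hoeffding's lemma and Markov's inequality exactly as before therefore yields a single high-probability event on which the inequality holds \emph{simultaneously} for all $A$ and all $\qa$; no $\KL(\rho\|\pi)$ term is needed to pay for selecting the algorithm.

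Next I would split the divergence by the same computation as in Lemma~\ref{lemma:KL_app}, which now produces $\KL(\post\|\pri)=\KL(\qa\|\p)+\E_{P\sim\qa}\sum_{i=1}^{n}\KL(A(S_i)\|P)=C_2(A,\q,\p)$, with the $\KL(\rho\|\pi)$ summand simply absent. Optimizing over $\lambda$ through the same union bound over $\Lambda=\{1,\dots,4mn\}$ and the floor/real-valued argument used in Part~III of the previous proof then gives, uniformly over all $A$ and $\q$,
\begin{align}
\ter_A - \her_A \le \sqrt{\frac{C_2(A,\q,\p)+\log(\frac{8mn}{\delta})+1}{2mn}}.
\end{align}
Taking $\E_{A\sim\rho}$ of both sides leaves the square root intact, giving the expectation-outside form of~\eqref{eq:bound_2-appendix}; combining this with Lemma~\ref{lemma:part1-appendix} by a union bound over the two $\frac{\delta}{2}$ events completes the argument.

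The main obstacle is conceptual rather than computational: it lies in arranging Part~II so that the bound holds uniformly over the whole algorithm set instead of merely in expectation over $\rho$. This is possible \emph{only} because a constant hyper-prior makes the moment-generating term algorithm-free; were $\p$ allowed to depend on $A$, one would be forced back to the coupled prior $\prior$ of Theorem~\ref{theorem:Main_pa-appendix}, reintroducing $\KL(\rho\|\pi)$ into the second complexity term and forcing the expectation over $A$ inside the square root. Everything downstream (Hoeffding, Markov, the $\lambda$-grid) is a verbatim repeat of the earlier proof.
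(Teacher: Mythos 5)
Your proposal is correct and follows essentially the same route as the paper's own proof: the same reuse of Lemma~\ref{lemma:part1-appendix}, the same per-algorithm change of measure between $\post$ and the algorithm-independent prior $\pri$, the same observation that the moment-generating term then depends only on the datasets so that Markov's inequality yields uniformity over all $A$ and $\q$, and the same $\lambda$-grid and final expectation over $A\sim\rho$ outside the square root. No gaps to report.
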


The three-step proof largely follows that of Theorem~\ref{theorem:Main_pa}, 
except for some differences that emerge because the constant hyper-prior allows 
some arguments to hold (with high probability over the datasets) uniformly for all algorithms at the same time.

\paragraph{Part I} We bound the different $\er(\rho)$ and $\ter(\rho)$ the same way as in Lemma~\ref{lemma:part1-appendix}.

\paragraph{Part II} 
We define the following two functions that produce distributions over $\M(\F)\times \F^{\otimes n}$, \ie they assign joint probabilities to tuples, $(P, f_1, ..., f_n)$ that contain a prior over models and $n$ models.

\begin{itemize}
    \item \emph{Posterior $\post$:} given as input an algorithm $A\in\A$ and 
    a hyper-posterior mapping $\q:\A\to\M(\F)$ as input, it outputs the 
    distribution over $\M(\F)\times\F^{\otimes n}$ with the following generating 
    process:
    \emph{i)} sample a prior $P\sim\qa$, 
    \emph{ii)} for each task, $i=1,\dots,n$, sample a model $f_i\sim A(S_i)$. 
\item \emph{Prior $\pri$:} given a hyper-prior $\p\in\M(\F)$ as input, it outputs the distribution over $\M(\F)\times \F^{\otimes n}$ with the following generating process: \emph{i)} sample a prior $P\sim\pa$, \emph{ii)} for each task, $i=1,\dots,n$, sample a model $f_i\sim P$.
\end{itemize}

\begin{lemma}\label{lemma1:part2-appendix2}
For any fixed meta-prior $\pi\in\M(\A)$, fixed hyper-prior $\p\in\M(\M(\F))$,  any $\delta>0$, and any $\lambda>0$, it holds with probability $1 - \frac{\delta}{2}$ over the sampling of datasets from training tasks that for all algorithms $A \in \A$ and for all hyper-posterior functions $\q: \A \rightarrow \M(\M(\F))$:
\begin{align}
         \ter(A) - \her(A)
         & \le \frac{1}{\lambda} \KL\big(\post \| \pri\big)  +    \frac{1}{\lambda} \log(\frac{2}{\delta}) +  \frac{\lambda}{8nm},
\intertext{where}
    \ter(A) &= \frac{1}{n} \sum_{i=1}^{n}\E_{z \sim D_i} \ell (z, A(S_i)),
\qquad\text{and}\qquad
    \her(A) = \frac{1}{n} \sum_{i=1}^{n} \frac{1}{|S_i|} \sum_{z\in S_i} \ell (z, A(S_i)).
\end{align}
\end{lemma}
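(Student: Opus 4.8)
The plan is to mirror the argument of Lemma~\ref{lemma1:part2-appendix}, but to exploit the fact that, under a constant hyper-prior, the prior $\pri$ no longer depends on the algorithm $A$. As before, I would first set, for each task $i$ and each model $f_i$,
\begin{align*}
\Delta_i(f_i) = \E_{z\sim D_i}\ell(z,f_i) - \frac{1}{|S_i|}\sum_{z\in S_i}\ell(z,f_i),
\end{align*}
and observe that, by the extended-loss convention $\ell(z,Q)=\E_{f\sim Q}\ell(z,f)$ and the generating process of $\post$ (sample $P\sim\qa$, then $f_i\sim A(S_i)$),
\begin{align*}
\E_{(P,f_1,\dots,f_n)\sim\post}\Big[\frac1n\sum_{i=1}^n\Delta_i(f_i)\Big] = \ter(A)-\her(A).
\end{align*}

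Next I would apply the change-of-measure inequality between $\post$ and $\pri$ with the test function $\frac1n\sum_i\Delta_i(f_i)$, which gives, for every fixed $A$ and every $\q$,
\begin{align*}
\ter(A)-\her(A) - \frac1\lambda\KL(\post\|\pri)\le \frac1\lambda\log\E_{(P,f_1,\dots,f_n)\sim\pri}\prod_{i=1}^n e^{\frac{\lambda}{n}\Delta_i(f_i)}.
\end{align*}
The decisive observation, and the reason $\rho$ can be dropped, is that the right-hand side depends neither on $A$ nor on $\q$, since $\pri$ is built only from the data-independent hyper-prior $\p$ (sample $P\sim\p$, then $f_i\sim P$). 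Thus the entire $A$- and $\q$-dependence sits on the left-hand side, while the only random (data-dependent) object to be controlled, $W := \E_{\pri}\prod_{i=1}^n e^{\frac{\lambda}{n}\Delta_i(f_i)}$, is a single scalar.

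I would then bound $W$ in expectation over the datasets exactly as in \eqref{eq:reordering}--\eqref{eq:hoeffding1}: since under $\pri$ each $f_i$ is drawn from the same data-independent $P$, the expectation factorizes across the $n$ tasks, and Hoeffding's lemma (each $\Delta_i$ has zero mean over $S_i$ and support in an interval of length $1$) yields $\E_{S_i}\E_{f_i\sim P}e^{\frac{\lambda}{n}\Delta_i(f_i)}\le e^{\lambda^2/(8n^2m)}$, hence $\E_{S_1,\dots,S_n}W\le e^{\lambda^2/(8nm)}$. A single application of Markov's inequality then gives, with probability at least $1-\frac\delta2$ over the datasets, $\log W\le \log(\frac2\delta)+\frac{\lambda^2}{8nm}$. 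Because $W$ involves neither $A$ nor $\q$, this one good event simultaneously controls the right-hand side for all $A\in\A$ and all $\q$, and substituting it into the change-of-measure display yields the claimed bound for every $A$ and every $\q$ at once.

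I expect the main (though modest) obstacle to be justifying this uniformity cleanly: I must argue that the change-of-measure step is a deterministic inequality valid pointwise for every realization of the data and every choice of $A,\q$, so that the sole stochastic ingredient is the algorithm-free quantity $W$. This is precisely what separates the constant-hyper-prior case from Theorem~\ref{theorem:Main_pa}, where the prior $\prior$ averages over $A\sim\pi$ and the uniform-in-$A$ control had instead to be purchased through the term $\KL(\rho\|\pi)$ inside the second complexity term.
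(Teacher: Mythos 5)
Your proposal is correct and follows essentially the same route as the paper's proof: the same change-of-measure step between $\post$ and $\pri$, the same factorization of the moment-generating function under the data-independent $\pri$, Hoeffding's lemma, and a single Markov bound on the algorithm-free scalar $W$, with uniformity over $A$ and $\q$ obtained exactly as you describe. The only difference is that you make the uniformity argument explicit where the paper leaves it implicit in its ``$\exists A,\q$'' Markov step, which is a welcome clarification rather than a deviation.
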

\begin{proof}
First for any task $i$ and any model $f_i$ we define:
\begin{align}
    \Delta_i(f_i) = \E_{z \sim D_i} \ell (z, f_i) - \frac{1}{|S_i|} \sum_{z\in S_i} \ell (z, f_i)
\end{align} 
By this definition and the definitions of $\ter$ and $\her$ we have:
\begin{align}
    \E_{(P, f_1, f_2, ..., f_n) \sim \post} \Big[\frac{1}{n} \sum_{i=1}^{n} \Delta_i(f_i) \Big] &= \ter(A) - \her(A) 
\end{align}

Using this equation and the \emph{change of measure inequality}~\citep{seldin2012pac} between the two distributions $\post$ and $\pri$, for any $\lambda>0$, any $A\in\A$ and any $\q:\A\to\M(\M(\F))$, we have:
\begin{align}
        \ter(A) - \her(A) - \frac{1}{\lambda}  \KL\big(\post || \pri\big)
        & \leq \frac{1}{\lambda} \log \E_{(P, f_1, f_2, ..., f_n) \sim \pri} \prod_{i=1}^{n} e^{\frac{\lambda}{n} \Delta_i(f_i)}
        \label{eq:change_of_measure_app2}
\end{align}

Because $\pri$ is independent of $S_1, ..., S_n$, we have
\begin{align}    
\E_{S_1, \dots, S_n} \E_{(P, f_1, f_2, ..., f_n) \sim \prior} \prod_{i=1}^{n} e^{\frac{\lambda}{n} \Delta_i(f_i)} 
&= \E_{S_1, \dots, S_n} \E_{P \sim \p} \E_{f_1 \sim P} \dots \E_{f_n \sim P} \prod_{i=1}^{n} e^{\frac{\lambda}{n} \Delta_i(f_i)} 
\\
&=  \E_{P \sim \p} \E_{S_1} \E_{f_1 \sim P} e^{\frac{\lambda}{n} \Delta_1(f_1)} \dots \E_{S_n}\E_{f_n \sim P} e^{\frac{\lambda}{n} \Delta_n(f_n)}
\\
&=  \E_{P \sim \p}\prod_{i=1}^n \E_{S_i}\E_{f_i \sim P} e^{\frac{\lambda}{n} \Delta_i(f_i)}
\label{eq:reordering2}
\end{align}

Each $\Delta_i(f_i)$ is a bounded random variable with support in 
an interval of size $1$. By Hoeffding's lemma we have 
\begin{align}
    \E_{S_i}\E_{f_i \sim P} e^{\frac{\lambda}{n} \Delta_i(f_i)} \le e^{\frac{\lambda^2}{8n^2m}}.
    \label{eq:hoeffding2}
\end{align}

Therefore by combining \eqref{eq:reordering2} and \eqref{eq:hoeffding2} we have:
\begin{align}    
\E_{S_1, \dots, S_n} \E_{(P, f_1, f_2, ..., f_n) \sim \pri} \prod_{i=1}^{n} e^{\frac{\lambda}{n} \Delta_i(f_i)} \le e^{\frac{\lambda^2}{8nm}}.
\end{align}
By Markov's inequality, for any $\epsilon>0$ we have
\begin{align}
    \mathbb{P}_{S_1, \dots, S_n}\Big(\E_{(P, f_1, f_2, ..., f_n) \sim \pri} \prod_{i=1}^{n} e^{\frac{\lambda}{n} \Delta_i(f_i)} \ge e^\epsilon\Big) \le e^{\frac{\lambda^2}{8nm} - \epsilon} \label{eq:markov2}
\end{align}
Hence by combining \eqref{eq:change_of_measure_app2} and  \eqref{eq:markov2} we get
\begin{align}
        &\mathbb{P}_{S_1, \dots, S_n}\Big(\exists{A, \q}: \ter(A) - \her(A) - \frac{1}{\lambda}  \KL(\post || \pri) \ge \frac{1}{\lambda} \epsilon\Big) \le e^{\frac{\lambda^2}{8nm} - \epsilon},
\end{align}
or, equivalently, it holds for any $\delta>0$ with probability at least $1 - \frac{\delta}{2}$:
\begin{align}
        \forall{A, \q}: \quad\ter(A) - \her(A) \le \frac{1}{\lambda}  \KL(\post || \pri) + \frac{1}{\lambda} \log(\frac{2}{\delta}) +  \frac{\lambda}{8nm}
\label{eq:bigKL-appendix2}
\end{align}
\end{proof}

The following lemma splits the $\KL$ term of \eqref{eq:bigKL-appendix2} into more interpretable quantities.
\begin{lemma}
    For the posterior, $\post$, and prior, $\pri$, defined above it holds:
    \begin{align}
    \begin{split}
        & \KL\big(\post || \pri\big) =  \KL(\qa\| \p) + \E_{P \sim \qa}  \sum_{i=1}^{n} \KL(A(S_i) || P)
    \end{split}
    \end{align}
    \label{lemma:KL_app2}
\end{lemma}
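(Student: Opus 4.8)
The plan is to prove this identity by the same direct density computation used for Lemma~\ref{lemma:KL_app}, now specialized to the constant hyper-prior case in which no expectation over algorithms enters. Since both $\post$ and $\pri$ are defined by explicit generating processes over $\M(\F)\times\F^{\otimes n}$ in which, conditioned on the sampled prior $P$, the $n$ models are drawn independently, their joint densities factorize. Concretely, the density of $\post$ at a tuple $(P, f_1,\dots,f_n)$ is $\qa(P)\prod_{i=1}^n A(S_i)(f_i)$, while the density of $\pri$ at the same tuple is $\p(P)\prod_{i=1}^n P(f_i)$.

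First I would write $\KL(\post\|\pri)$ as the expectation under $\post$ of the logarithm of the ratio of these two densities. Because the logarithm of a product is a sum of logarithms, this ratio splits into a term $\ln\frac{\qa(P)}{\p(P)}$ coming from the prior-level densities and a sum of terms $\ln\frac{A(S_i)(f_i)}{P(f_i)}$ coming from the model-level densities. Taking the expectation under $\post$, that is, first over $P\sim\qa$ and then, for each $i$, over $f_i\sim A(S_i)$, the first term yields exactly $\KL(\qa\|\p)$, and the $i$-th summand yields $\E_{P\sim\qa}\E_{f_i\sim A(S_i)}\ln\frac{A(S_i)(f_i)}{P(f_i)} = \E_{P\sim\qa}\KL(A(S_i)\|P)$. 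Summing over $i$ and using linearity of expectation gives the claimed decomposition, which is precisely the complexity term $C_2(A,\q,\p)$.

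The computation is essentially routine, and the only point requiring care is justifying the factorized form of the two densities, which follows immediately from the conditional independence of the $f_i$ given $P$ in both generating processes. It is also worth noting explicitly why, in contrast to Lemma~\ref{lemma:KL_app}, no $\KL(\rho\|\pi)$ term appears here: the algorithm $A$ is a fixed input to both $\post$ and $\pri$ rather than being sampled from a distribution over $\A$, so there is no algorithm-level log-ratio to contribute. I do not anticipate any genuine obstacle; the value of the lemma is organizational, turning the opaque $\KL(\post\|\pri)$ of Lemma~\ref{lemma1:part2-appendix2} into the interpretable quantities appearing in Theorem~\ref{theorem:Main_p}.
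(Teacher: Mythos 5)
Your proposal is correct and follows essentially the same route as the paper's own proof: write out the factorized densities of $\post$ and $\pri$, expand the log-ratio into the prior-level term and the $n$ model-level terms, and take expectations to obtain $\KL(\qa\|\p)$ and $\E_{P\sim\qa}\sum_{i=1}^n\KL(A(S_i)\|P)$. Your added remark explaining why no $\KL(\rho\|\pi)$ term appears (since $A$ is a fixed input rather than sampled) is accurate and consistent with the paper's construction.
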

\begin{proof}
\begin{align}
    \KL(\post || \pri) &=  \E_{P \sim \qa} \Big[ \E_{f_i \sim A(S_i)} \ln \frac{\qa(P) \prod_{i=1}^n A(S_i)(f_i)}{\p(P) \prod_{i=1}^n P(f_i)} \Big]
    \\& = \E_{P \sim \qa} \Big[\ln \frac{\qa(P)}{\p(P)} \Big] + \E_{P \sim \qa} \Big[\sum_{i=1}^n \E_{f_i \sim A(S_i)} \ln \frac{A(S_i)(f_i)}{P(f_i)} \Big] \Bigg]
    \\& = \KL(\qa\| \p) + \E_{P \sim \qa}  \sum_{i=1}^{n} \KL(A(S_i) || P)
\end{align}
\end{proof}

\paragraph{Part III} We now finish the proof of Theorem \ref{theorem:Main_p} by combining the above results.

\begin{proof}
By applying a union bound for all the values of $\lambda \in \Lambda$ with $\Lambda=\{1, \dots, 4mn\}$ we obtain that
\begin{align}
        \mathbb{P}_{S_1, \dots, S_n}\Big(\forall{A, \q}:  \ter(A) - \her(A)
        &\le \sqrt{\frac{\KL\big(\post || \pri\big) + \log(\frac{8mn}{\delta}) + 1}{2mn}}\Big) \ge 1 - \frac{\delta}{2}.
\end{align}
Because this inequality holds (with high probability over the datasets) for all algorithms at the same time, it also holds in expectation over algorithms with respect to any distribution. Therefore we have:
\begin{align}
        &\mathbb{P}_{S_1, \dots, S_n}\Big(\forall{\rho, \q}: \E_{A \sim \rho} [\ter(A) - \her(A)]
        \le \E_{A \sim \rho} \sqrt{\frac{\KL(\post || \pri) + \log(\frac{8mn}{\delta}) + 1}{2mn}}\Big) \ge 1 - \frac{\delta}{2},
\end{align}
or, equivalently, 
\begin{align}
        &\mathbb{P}_{S_1, \dots, S_n}\Big(\forall{\rho, \q}: \ter(\rho) - \her(\rho)
        \le \E_{A \sim \rho} \sqrt{\frac{\KL(\post || \pri) + \log(\frac{8mn}{\delta}) + 1}{2mn}}\Big) \ge 1 - \frac{\delta}{2}.
\end{align}
In combination with Lemma \ref{lemma:KL_app2}, with probability at least $1 - \frac{\delta}{2}$ we have for all $\rho\in\M(\A), \q:\A\to\M(\M(\F))$:
\begin{align}
        &\ter(\rho) - \her(\rho)
        \le  
          \E_{A \sim \rho} \sqrt{\frac{C_2(A, \q, \p)+ \log(\frac{8mn}{\delta}) + 1}{2mn}}
        \label{eq:ter-her2}
\end{align}  
where $C_2(A, \q, \p)$ is defined as in \eqref{eq:complexity_2-appendix}.
Combining \eqref{eq:ter-her2} and Lemma \ref{lemma:part1-appendix} concludes the proof. 
\end{proof}

\newpage

\section{Experimental Details}\label{app:experiments}

In this section, we provide the details of our experiments.

\subsection{Setup}
We follow the experimental setup proposed in \citet{amit2018meta} for benchmarking meta-learning methods.  
The experiment consists of two types of tasks based on the MNIST dataset \citep{mnist}.
In the first one, each task is the MNIST classification task with a task-specific permutation in the labels.
In the second experiment, each task has the MNIST images as samples, but a task-specific shuffle is applied to a subset of 200 pixels of the input.
In both cases, there are 10 training tasks and 20 test tasks. We use 600 samples per training task and 100 samples per test task.
This choice corresponds to a setup in which independent learning cannot be expected to 
provide good performance (the number of samples per task is small), and meta-learning 
is necessary.

\subsection{Meta-learning Algorithm}
As discussed in Section~\ref{sec:background}, the meta-learning mechanism introduced in \citet{amit2018meta} 
 is to learn a hyper-posterior over priors in the training phase. 
For a future task, they minimize a PAC-Bayes bound based on this prior. 
As mentioned in Section \ref{sec:experiments}
they train a neural network and use the same prior as the initialization point. This setup does not allow learning an initialization different from the prior used for regularization. 

To show the benefits of the additional freedom provided by our framework, we use the same procedure except that we learn a separate initialization for the network which can differ from the prior used in the objective, as discussed in Sections \ref{sec:discussion} and \ref{sec:experiments}.
In the stochastic setting, we learn a meta-posterior $\rho$ over the initialization prior and the regularization prior in the training phase.
For future tasks, we sample the two distributions $(P_0, P_1)$ from $\rho$, initialize our stochastic neural network by $P_0$, and optimize a PAC-Bayes bound with prior $P_1$.
Note that the meta learner is free to make use of the added flexibility by learning $P_0$ different from $P_1$, or to recover the previous setup by learning $P_0$ identical to $P_1$. 
We do not have to fear overfitting from the larger set of parameters, because the objective is based on a generalization bound that enforces appropriate regularization.

We use Gaussians for all distributions, which allows us to compute the complexity terms in closed form.
More precisely, let $d$ be the number of weights in our neural network and we represent the prior and posteriors by their mean $\mu_i$ and the log variance value $\log \sigma_i$ for the weight $w_i$. 
Formally, we represent each $\rho$ as $\rho_0\times\rho_1$, which $\rho_i$ is a distribution over $P_i$, and has the form of $\mathcal{N}(\theta_i, \kappa_{\rho}^2 I_{2d \times 2d})$.
We use a fixed parameter for $\kappa_\rho$,
and we learn the means $\theta_i$. 
The meta-priors also have the same form, \ie 
$\pi = \pi_0\times\pi_1$ and $\pi_i = \mathcal{N}(0, \kappa_{\pi}^2 I_{2d \times 2d})$, with fixed $\kappa_{\pi}$.

To use our generalization bounds for this mechanism,
we apply Theorem \ref{theorem:Main_pa} (with $\delta = 0.1$) and 
we set we $\q(P_0, P_1)=\p(P_0, P_1)=\delta_{P_1}$.
The result is a bound
\begin{align}
\er(\rho)  &\le \her(\rho) +\sqrt{ \frac{\KL(\rho\|\pi) + \log(\frac{4\sqrt{n}}{\delta})}{2n}}  
+   \sqrt{\frac{\KL(\rho\|\pi) + \E_{(P_0, P_1) \sim \rho} \sum_{i=1}^{n} \KL(A(S_i) || P_1)]+ \log(\frac{8mn}{\delta}) + 1}{2mn}}
\label{eq:64}
\end{align} 
in which $\KL(\rho \| \pi)$ has the following form:
\begin{align}
    \KL(\rho \| \pi) = \frac{4d \kappa_{\rho} + \| \theta_0 \|^2 + \| \theta_1 \|^2}{2\kappa_{\pi}} - 2d + 4d\log(\frac{\kappa_{\pi}}{\kappa_{\rho}})
\end{align}

\textbf{Training phase.} 
In the training phase, we optimize the 
right-hand side of \eqref{eq:64} to find the meta-posterior $\rho$.
As in~\citet{amit2018meta} we use the Monte Carlo method to approximate the values for calculating the expectation terms, and use the re-parametrization trick~\citep{kingma2015variational} to optimize the expected value of the $\KL(A(S_i)||P_1)$ terms.

To find $\rho$ we follow the optimization procedure 
defined in \citet{amit2018meta}. For each task $i$, we assign a stochastic neural network $Q_i$, initiated in the following way: 
The mean of each weight is initiated randomly
with the Glorot method~\citep{glorot}, and the $\log$-variance of each weight is initiated randomly from $\mathcal{N}(-10, 0.1^2)$.

In their original optimization procedure \citet{amit2018meta} optimize their objective 
for 200 epochs to find the hyper-posterior.
Because our meta-distribution has two parts: 
one for initialization and one for regularization,
we add the following change to this procedure: 
In the first 100 epochs, we assume $\rho_0$ and $\rho_1$ are equal and we minimize the bound to find $\rho_0=\rho_1$ and posteriors $Q_i$s. 
After 100 epochs, we fix $\rho_0$, initialize the $Q_i$s by sampling from $\rho_0$ (Since $\rho_0$ is supposed 
to be the meta-distribution over initialization prior)
and optimize the bound for $\rho_1$ and the $Q_i$s for another 100 epochs.

\textbf{Future tasks.} For a future task $T_\text{new}$ with 
its training dataset $S_\text{new}$ we learn its posterior as follows:
we sample the initialization prior from $\rho_0$ 
to initiate a stochastic neural network $Q_\text{new}$.
Then, similar to previous works, we optimize the
following PAC-Bayesian bound with the prior $P_1$ 
sampled from $\rho_1$ for 100 epochs.
\begin{align}
     &\her(Q_\text{new}) + \sqrt{\frac{\KL(Q_\text{new} || P_1)+ \log(\frac{8m}{\delta}) + 1}{2m}}
\end{align}
We use Monte Carlo sampling to approximate the expectations.

\subsection{Implementation and Numeric Results} %
Our implementation is based on the code of \citet{amit2018meta}, 
except that we fixed a bug in their computation of the $\KL$-divergences,
which was also present in later works derived from it. 
Furthermore, we corrected an issue with how the gradients 
of the objective in \citet{rezazadeh2022unified} were computed. 
All experiments were done with the corrected implementation.\footnote{\href{https://github.com/hzakerinia/Flexible-PAC-Bayes-Meta-Learning/}
{\url{https://github.com/hzakerinia/Flexible-PAC-Bayes-Meta-Learning/}}}

We use the same network architectures as proposed~\citet{amit2018meta}: a small ConvNet 
with two convolutional layers and one fully-connected layer for the \emph{permuted labels} task, and a three-layer fully-connected network for the \emph{shuffled pixels} task. 
For further details on the architecture, see the original reference.
We used the Adam optimizer with a learning rate of $10^{-3}$ and the number of Monte Carlo iterations was 1 in all experiments.
For the fixed parameters of the minimization objective,
we put $\delta=0.1$, and for the variances of meta-prior $\pi$ and meta-posterior $\rho$, we set $\kappa_{\pi} = 10^2$ and $\kappa_{\rho} = 10^{-3}$. 
Moreover, the batch size is 128 and the used loss function is Cross-Entropy loss.

The experimental results are shown in the Table~\ref{table:experiment}.
We compare our results with the following prior-based works: 
The (MLAP-M) bound of \citet{amit2018meta},
The Classic bound of \citet{rezazadeh2022unified},
and the kl-bound of \citet{guan2022fast}.
The results confirm that the extra flexibility of our framework can be beneficial.
Specifically, in the \emph{permuted labels} experiment, having a different initialization and regularization helps a lot. In the \emph{shuffled pixel} setting, the flexibility does not help and we get the same performance as the previous methods.
A noteworthy feature of Table~\ref{table:experiment} is the high 
error of \citet{rezazadeh2022unified} and \citet{guan2022fast} in 
the permuted labels task. 
These are a consequence of the fact that the complexity terms in their 
bounds are very big in low-data regime that we are interested in
(where meta-learning is meant to help). 
As a result, the optimization mainly attempt as reducing the complexity terms, 
which leads to underfitting and classification performance as good as a 
random guess ($\approx 90\%$ error).
The same problem occurs for the Catoni-type bound in \citet{guan2022fast} 
for both experimental settings, so we do not report its results. 

\paragraph{Comparison of our bounds with and without different initialization and regularization:}
We present an ablation study in Table~\ref{tab:init_prior_app}.
To see if the added flexibility of our setting is
indeed responsible for the improved results rather
than the different objective compared to prior
work, we compare our results with the case 
that the two distributions are equal
when we use the distribution learned for the regularization for the initialization as well.
As one can see, using different distributions for 
initialization and regularization reduce the error 
in the \emph{permuted labels} task, but for \emph{shuffled pixels} they stay the same.

\begin{table*}[h!]
\centering
\begin{tabular}{ |c||c|c|   }
 \hline
 Bound & Shuffled Pixel & Permuted Label\\
 \hline 
 initialization identical to prior ($\rho_0=\rho_1$)   & \np{10.1} $\pm$ \np{1.25} & \np{15.5} $\pm$ \np{4.27} \\
 initialization can differ from prior   & \np{9.9} $\pm$ \np{1.14} & \np{7.91} $\pm$ \np{1.7} \\
 \hline
\end{tabular}
\caption{The ablation study to confirm that the gained performance is due to separate initialization and regularization. We compare our method with the case if we use the distributions in the end of our training procedure both as the initialization and regularization. As one can see, having different distributions leads to better performance in one of the tasks.}
\label{tab:init_prior_app}
\end{table*}

\end{document}